\documentclass{article}

\usepackage[main, final]{neurips_2026}
\usepackage[utf8]{inputenc} % allow utf-8 input
\usepackage[T1]{fontenc}    % use 8-bit T1 fonts
\usepackage{graphicx}
\usepackage{url}
\usepackage{booktabs}
\usepackage{mathtools}
\usepackage{amsfonts}
\usepackage{nicefrac}
\usepackage{microtype}
\usepackage{xcolor}
\usepackage{amsmath}
\usepackage{hyperref}
\usepackage{fontawesome5}
\usepackage{placeins}
\usepackage{enumitem}
\usepackage{wrapfig}
\usepackage[most]{tcolorbox}
\usepackage{xcolor}
\definecolor{lemmacolor}{HTML}{B0C8C2}
\definecolor{theoremcolor}{HTML}{85A196}
\usepackage{amsthm}
\newtheorem{lemma}{Lemma}
\newtheorem{theorem}{Theorem}
\tcolorboxenvironment{lemma}{
  colback=lemmacolor!18,
  colframe=lemmacolor!60,
  boxrule=0.3pt,
  arc=1pt,
  left=5pt,
  right=5pt,
  top=5pt,
  bottom=5pt,
  before skip=4pt,
  after skip=4pt,
  breakable,
  enhanced jigsaw
}
\tcolorboxenvironment{theorem}{
  colback=theoremcolor!15,
  colframe=theoremcolor!65,
  boxrule=0.3pt,
  arc=1pt,
  left=5pt,
  right=5pt,
  top=5pt,
  bottom=5pt,
  before skip=4pt,
  after skip=4pt,
  breakable,
  enhanced jigsaw
}
\usepackage[table]{xcolor}
\definecolor{MyBlue}{HTML}{E1D5E7}

\definecolor{MyText}{HTML}{6989BA}
\newcommand{\parenclrbf}[1]{{\small\bfseries\textcolor{MyText}{#1}}}
\usepackage{xcolor}
\definecolor{re}{HTML}{8630f4}

\title{Test-Time Prompt-Agnostic Decomposition}

\author{Junze Wang$^{1}$
Lei Fan$^{2}$
Dezheng Zhang$^{1}$
Donglin Di$^{3}$ 
Yang Song$^{2}$
Sidong Liu$^{4}$
Cong Cong$^{4}$\thanks{Corresponding author.} \\
$^{1}$University of Science and Technology
Beijing 
$^{2}$University of New South Wales \\
$^{3}$Sympflux Co., Ltd.
$^{4}$Macquarie University\\
\texttt{wangjunze@yeah.net,thomas.cong@mq.edu.au}}

\begin{document}

\maketitle
\setcounter{footnote}{0}

\begin{abstract}
Test-Time Prompt Tuning (TPT) adapts large pretrained models to unseen target distribution shifts by updating only lightweight prompt tokens. Existing TPT methods are effective, but they mainly optimize each prompt update locally. Across test-time steps, prompts updated at the current step are reused for the corresponding predictions and updates, so noisy or biased unlabeled objectives can accumulate and destabilize the prompt-update trajectory. We identify two failure modes: magnitude expansion, where updates move the prompt too far, and directional drift, where consecutive updates point in inconsistent directions. To analyze and control these failures, we propose Test-Time Prompt-Agnostic Decomposition ($\mathtt{TPD}$), which characterizes observed prompt-update trajectories from a dynamical-systems view and decomposes base prompt updates into magnitude and direction. $\mathtt{TPD}$ computes a spectral radius to measure update expansion and decomposes update directions into persistent, oscillatory, and residual components. Building on this decomposition, Adaptive Koopman Control ($\mathtt{AKC}$) regulates update magnitude by shrinking updates under expansive recent dynamics, while Hankel Update Router ($\mathtt{HUR}$) refines update direction by preserving persistent components and suppressing oscillatory and residual components. Together, $\mathtt{AKC}$ and $\mathtt{HUR}$ produce a stabilized prompt update. As a prompt-agnostic framework, $\mathtt{TPD}$ can be plugged into visual and text TPT methods without modifying the backbone, prompt architecture, or adaptation loss. Experiments on 15 datasets show that $\mathtt{TPD}$ consistently improves 10 baselines, with accuracy gains of $3$--$8\%$ for visual prompts and over $2\%$ for text prompts\protect\footnotemark.
\end{abstract}
\footnotetext{The code is available at \href{https://github.com/reeive/TPD}{https://github.com/reeive/TPD}.}

\section{Introduction}

Test-Time Prompt Tuning (TPT) has emerged as a parameter-efficient paradigm for adapting large pretrained vision and vision-language models, such as ViT~\citep{dosovitskiy2020image} and CLIP~\citep{radford2021learning}, to unseen target distribution shifts~\citep{yuan2023robust}. Instead of updating the full backbone, TPT adapts only prompt tokens during inference, making it well suited for frozen foundation models~\citep{liang2025comprehensive}.
Existing methods adapt prompts from unlabeled test data using step-level objectives, including entropy minimization~\citep{shu2022test}, confidence-based augmentation selection~\citep{feng2023diverse, farina2024frustratingly}, prompt calibration~\citep{yoon2024c}, prompt-history regularization~\citep{ma2023swapprompt,xiao2025dynaprompt}, and black-box prompt updates~\citep{meng2025black}. Although effective, these methods primarily optimize each prompt update locally. This local-update view is incomplete for TPT across steps: prompts updated at the current step are used for the corresponding predictions and updates, so noisy or biased updates from unlabeled objectives can influence later prompt states and accumulate over time~\citep{niu2023towards}.

This cross-step dependency exposes a key weakness of existing TPT methods. Consecutive updates may move with inappropriate magnitudes or point in inconsistent directions, leading to prompt-update trajectory instability. As illustrated in Figure~\ref{fig1}(a), CLIP-based text TPT shows pronounced oscillations in accuracy and update behavior, while ViT-based visual TPT exhibits unstable update behavior together with persistently low accuracy across test steps. These observations indicate that existing methods lack a principled tool for analyzing and controlling prompt updates across test steps. To better characterize prompt-update trajectories, we decompose each update into two complementary components: magnitude, which describes the update step size, and direction, which describes the update orientation \citep{bottou2018optimization}. This magnitude-direction view provides a structured way to analyze prompt-update trajectories: instability may arise when update magnitudes expand across steps, when update directions drift or oscillate over time, or when both occur jointly.

Accordingly, we propose Test-Time Prompt-Agnostic Decomposition ($\mathtt{TPD}$), which characterizes observed prompt-update trajectories from a dynamical-systems perspective~\citep{williams2015data}. $\mathtt{TPD}$ operationalizes the magnitude–direction view through the prompt-update decomposition, enabling both analysis and stabilization across test steps. As illustrated in Figure~\ref{fig1}(b), $\mathtt{TPD}$ provides a quantitative interpretation of the unstable behaviors observed in Figure~\ref{fig1}(a) by decomposing trajectory dynamics into two complementary branches. For the magnitude branch, $\mathtt{TPD}$ computes a spectral radius as an expansion measure, where larger values indicate stronger growth of update magnitudes across recent steps. For the direction branch, $\mathtt{TPD}$ decomposes each update into three additive components: a persistent component capturing consistent adaptation signals, an oscillatory component capturing sign-flipping interference, and a residual component capturing unexplained noise. These quantities provide an explicit basis for magnitude expansion and directional drift.

\begin{figure}[t]
    \centering
    \includegraphics[width=0.95\linewidth]{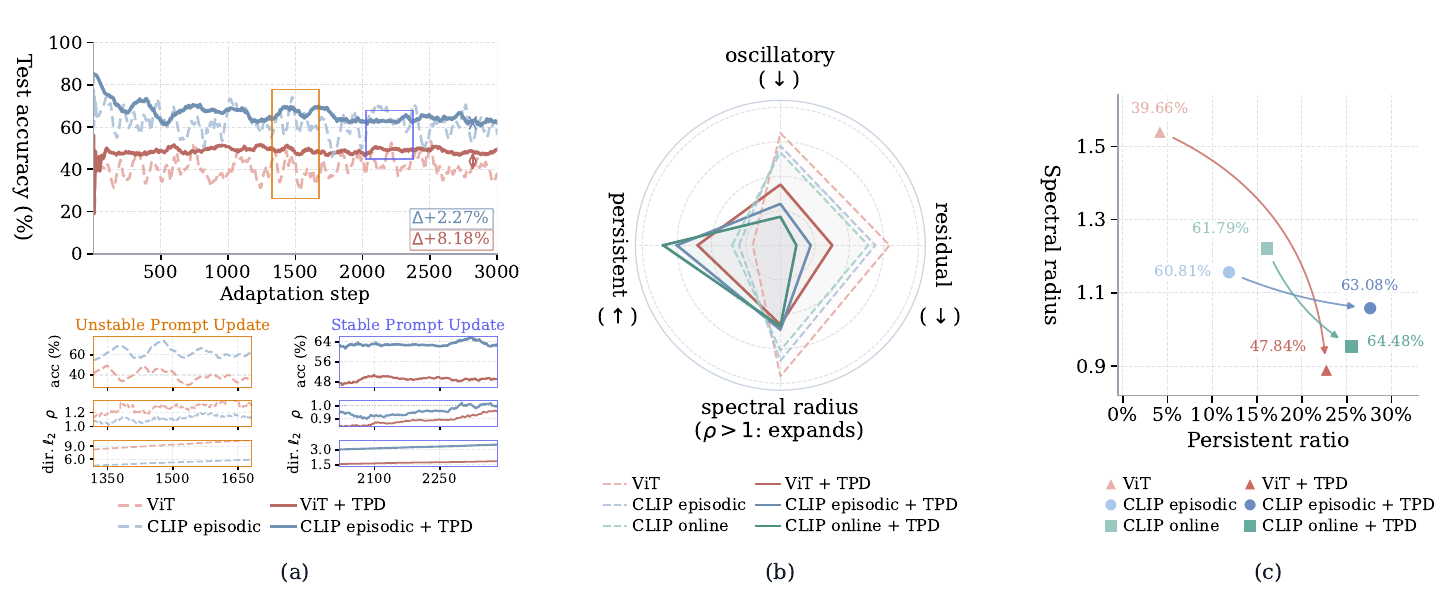}
    \caption{(a) The accuracy of ViT visual and CLIP text prompts in episodic modes, averaged over four ImageNet variants. Gains denote $\mathtt{TPD}$ improvements. 
    The zoomed windows show accuracy, magnitude (spectral radius $\rho$), and direction ($\ell_2$ distance between consecutive update vectors, decomposed in (b)) in unstable and stable regimes. (b) $\mathtt{TPD}$ decomposes each prompt update into persistent, oscillatory, and residual components, with the spectral radius measuring local expansion. (c) Stability-performance relation summarized by spectral radius and persistent ratio.}
    \label{fig1}
\end{figure}

The decomposition quantities reveal a stability pattern in Figure~\ref{fig1}(c): methods with a large spectral radius and a low persistent ratio consistently exhibit poor performance. Specifically, the spectral radius summarizes magnitude expansion, while the persistent ratio denotes the fraction of update direction assigned to the persistent component.
To this end, we propose two complementary mechanisms. Adaptive Koopman Control ($\mathtt{AKC}$) regulates update magnitude by shrinking updates when recent prompt dynamics indicate expansion. Hankel Update Router ($\mathtt{HUR}$) refines update direction by preserving persistent components across recent updates while suppressing oscillatory and residual components. These two mechanisms produce a stabilized prompt update. Since $\mathtt{TPD}$ only processes the base prompt update, it can be plugged into both visual and text TPT methods without modifying the backbone, prompt architecture, or adaptation loss. The contributions are summarized as follows:
\begin{itemize}
\item To the best of our knowledge, Test-Time Prompt-Agnostic Decomposition ($\mathtt{TPD}$) is the first prompt-agnostic framework that analyzes and stabilizes prompt-update trajectories across ViT-based visual TPT and CLIP-based text TPT methods. 
\item We introduce Adaptive Koopman Control ($\mathtt{AKC}$), which uses the spectral radius to scale down updates under expansive local dynamics and further attenuates them under consecutive unstable steps, preventing excessive prompt-state movement.
\item We propose Hankel Update Router ($\mathtt{HUR}$), which uses the three-component decomposition to reshape update directions by preserving persistent components and suppressing oscillatory and residual components, thereby mitigating directional conflict and cumulative drift.
\item Extensive experiments on 15 datasets across shifted-domain and cross-dataset benchmarks demonstrate the generality of $\mathtt{TPD}$. Across 10 TPT methods, $\mathtt{TPD}$ achieves the accuracy gains of $3$--$8\%$ for visual prompts and over $2\%$ for text prompts.
\end{itemize}

\section{Related Work}

\noindent\textbf{Prompt-based test-time adaptation.}
Most existing TPT methods are developed for CLIP-based text prompt tuning and can be broadly divided into episodic and online modes. In episodic mode, prompts are adapted independently for each test sample or episode and are typically reset afterward, including initialization and calibration~\citep{abdul2023align, yoon2024c}, sample filtering or augmentation selection~\citep{shu2022test, feng2023diverse, farina2024frustratingly}, and black-box prompt update rules~\citep{meng2025black}. In online mode, prompts are updated continuously along test steps, and existing methods mainly regulate prompt drift or leverage prompt history~\citep{ma2023swapprompt, zhang2024historical, xiao2025dynaprompt}. Furthermore, CTPT~\citep{wang2025ctpt} focuses on continual prompt tuning across multiple target domains, aiming at prompt retention and generalization over newly learned streaming data. Some prompt-based methods are also built on ViT-based visual prompts~\citep{jia2022visual}; for example, DePT~\citep{gao2022visual} improves ViT prompt adaptation under domain adaptation with memory-bank and pseudo-label.

Despite these differences, existing TPT methods still intervene mainly at individual test steps. Moreover, CTPT~\citep{wang2025ctpt} targets continual multi-domain prompt retention, and DePT~\citep{gao2022visual} is developed for domain adaptation. In contrast, our method analyzes and stabilizes cross-step prompt-update trajectories within each single target-domain adaptation process.

\noindent\textbf{Prompt-based tuning and optimization.}
Prompt-based optimization has been developed across downstream tasks. Specifically, some methods improve prompt construction and backbone interaction \citep{xiao2026layer}. VPT introduces learnable visual prompts for adapting a frozen ViT \citep{jia2022visual}; E2VPT extends visual prompt tuning with more effective prompt insertion across layers \citep{han20232vpt}; and CVPT further strengthens prompt interaction across layers \citep{huang2025cvpt}. In vision-language models, ArGue improves prompt construction through attribute-guided prompts \citep{tian2024argue}, while TCP uses class-aware textual prompts to better align prompts with class semantics \citep{yao2024tcp}. Feature prompting \citep{cong2026slide,cong2026flexible} and hierarchical image--text learning \citep{fan2026medical} have also been explored in medical imaging applications. Furthermore, some methods improve optimization and stabilization during training \citep{xiao2026staying,xiao2026not,lan2024gradient,lan2026trust}. Fair-VPT introduces fairness-aware objectives to regularize prompt learning \citep{park2024fair}, BVPT analyzes bursty update behavior during prompt optimization \citep{wang2025attention}.
PAE \citep{wang2026visual} is related to $\mathtt{TPD}$ in its use of dynamical systems, but it focuses on training-time visual prompt optimization by modeling cross-layer prompt evolution. In contrast, $\mathtt{TPD}$ analyzes and stabilizes the prompt-update trajectory of test samples. 

These methods improve prompt learning through more effective training-time optimization, but prompt optimization under test-time inference remains largely unexplored. In contrast, our work focuses on prompt adaptation during test-time inference, with an emphasis on trajectory-level diagnosis and control of prompt updates across test steps.

\section{Method}

% \subsection{Preliminary}
\subsection{Test-Time Prompt-Agnostic Decomposition}

Given unlabeled test inputs, let $f(\mathcal{B}_t;\theta,\mathcal{P}_t)$ denote a pretrained foundation model at test step $t$ with frozen backbone $\theta$, \textit{e.g.}, ViT and CLIP. For ViT-based visual TPT, $\mathcal{P}_t$ consists of learnable prompt tokens injected into the encoder blocks, and $\mathcal{B}_t$ is the current mini-batch across test steps. For CLIP-based text TPT, $\mathcal{P}_t$ consists of learnable text prompt tokens used to construct class textual features, while $\mathcal{B}_t$ depends on the test-time adaptation (TTA) mode: in online mode, $\mathcal{B}_t$ is the current input that updates the text prompts cumulatively; in episodic mode, $\mathcal{B}_t$ is the current episode, typically a set of augmented views from a single test sample. 

For each ViT- or CLIP-based TPT method, we flatten its prompt tokens into a single vector:
\begin{equation}
\mathbf{p}_t = \operatorname{vec}(\mathcal{P}_t) \in \mathbb{R}^{d},
\label{eq1}
\end{equation}
where $\operatorname{vec}$ concatenates the prompt tokens, and $d$ is the prompt-vector dimension. Given $\mathcal{B}_t$ and a TTA loss $\mathcal{L}_{\mathrm{tta}}$, which is typically employed as entropy minimization~\citep{wang2020tent}, the base prompt-update $\mathbf{u}_t$ at step~$t$ is
\begin{equation}
\mathbf{u}_t
= -\eta_t \phi_t (\nabla_{\mathbf{p}} \mathcal{L}_{\mathrm{tta}}(\mathcal{B}_t; \mathbf{p}_t)).
\label{eq2}
\end{equation}
where $\eta_t$ is the learning rate at step~$t$, $\phi_t$ denotes the base optimizer, and $\nabla_{\mathbf{p}} \mathcal{L}_{\mathrm{tta}}(\mathcal{B}_t; \mathbf{p}_t)$ denotes the gradient of the TTA loss with respect to the prompt state $\mathbf{p}_t$. 

By taking the base prompt-update $\mathbf u_t$ as input and stabilizing it directly, $\mathtt{TPD}$ is agnostic to visual and textual prompt types. As shown in Figure \ref{fig2}, $\mathtt{TPD}$ analyzes the recent prompt-update trajectory in a short-window subspace and decomposes it: the spectral radius captures magnitude expansion, while an additive decomposition separates persistent, oscillatory, and residual directional components.

\begin{figure}[t]
    \centering
    \includegraphics[width=\linewidth]{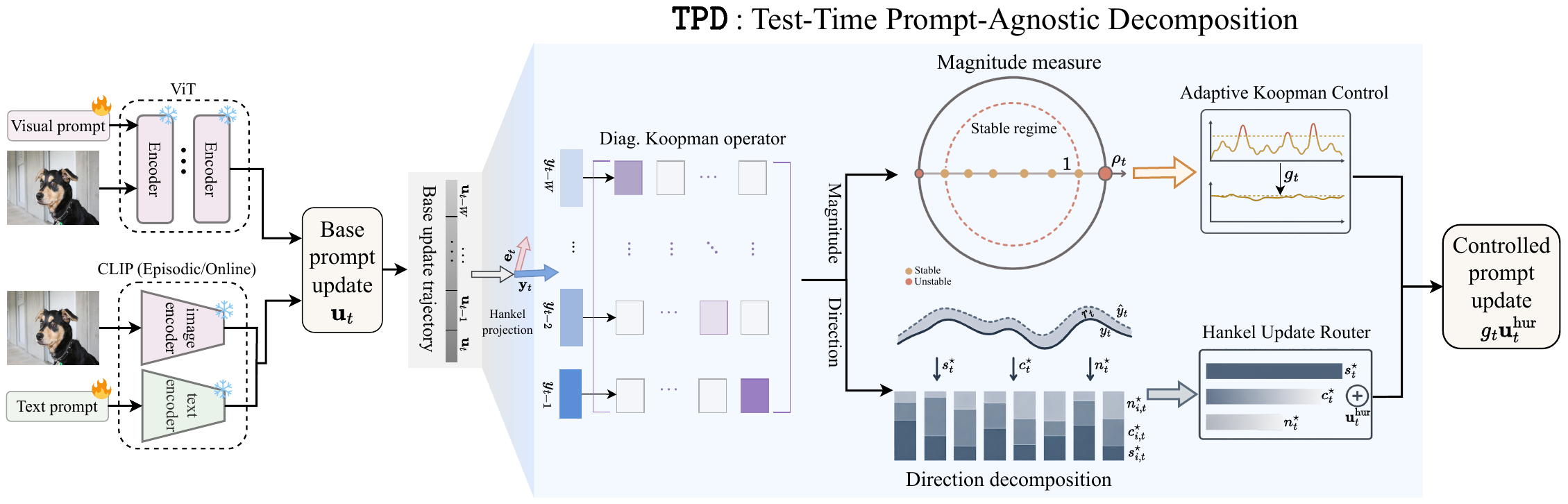}
    \caption{Overview of $\mathtt{TPD}$. The base prompt update $\mathbf{u}_t$ from visual or text TPT is delay-embedded with a Hankel window and projected into a local Hankel subspace. $\mathtt{AKC}$ regulates update magnitude and $\mathtt{HUR}$ refines update direction. The controlled magnitude and refined direction are then combined into a stabilized prompt update $g_t\mathbf u_t^{\mathrm{hur}}$, which is applied to the prompt state at step $t$.}
    \label{fig2}
\end{figure}

\noindent\textbf{Hankel subspace from recent updates.}
$\mathtt{TPD}$ builds a short-window representation of the recent prompt-update trajectory using the delay-coordinate construction in Hankel-DMD~\citep{arbabi2017ergodic}. Specifically, it uses the latest $W$ base prompt updates and converts them into $M$ overlapping Hankel embeddings, where each embedding stacks $K$ consecutive updates and $W=K+M-1$. For any index $\tau\ge K$, the Hankel-lifted vector $\mathbf z_\tau$ stacks the latest $K$ update vectors: $\mathbf{z}_{\tau}=\big[
\mathbf{u}_{\tau-K+1}^{\top},
\mathbf{u}_{\tau-K+2}^{\top},
\dots,
\mathbf{u}_{\tau}^{\top}
\big]^{\top}
\in\mathbb{R}^{Kd}$.
Using the $M$ embeddings before the current update, we define the Hankel history matrix at test step $t$:
\begin{equation}
\mathcal{H}_t
=
\big[
\mathbf{z}_{t-M},
\mathbf{z}_{t-M+1},
\dots,
\mathbf{z}_{t-1}
\big]
\in\mathbb{R}^{Kd\times M}.
\label{eq:main_history}
\end{equation}
Let $Q_t\in\mathbb{R}^{Kd\times r}$ contain the top-$r$ left singular vectors of $\mathcal H_t$\footnote{By the Eckart--Young theorem~\citep{eckart1936approximation}, the top singular directions provide the optimal rank-$r$ subspace.}, where $r\le \min(Kd,M)$ is selected adaptively from the current singular spectrum, and project the current lifted update $\mathbf z_t$ onto this short-window Hankel subspace:
\begin{equation}
\mathbf{y}_t=Q_t^\top\mathbf{z}_t\in\mathbb{R}^{r},
\qquad
\mathbf{z}_t=Q_t\mathbf{y}_t+\widetilde{\mathbf{e}}_t,
\qquad
\widetilde{\mathbf{e}}_t=(I_{Kd}-Q_tQ_t^\top)\mathbf{z}_t.
\label{eq:main_projection_lifted}
\end{equation}
The prompt-update decomposition is obtained by extracting the most recent block:
\begin{equation}
\mathbf{u}_t=E_KQ_t\mathbf{y}_t+\mathbf{e}_t,
\qquad
\mathbf{e}_t=E_K\widetilde{\mathbf{e}}_t,
\qquad
E_K=[\mathbf{0}_{d\times (K-1)d}\;\;I_d],
% \in\mathbb{R}^{d\times Kd},
\label{eq:main_projection_current}
\end{equation}
where $\widetilde{\mathbf e}_t$ is the residual, and $\mathbf e_t$ is its most recent $d$-dimensional prompt-update block.

\noindent\textbf{Short-window Koopman operator.}
To capture recent update dynamics in the projected space, we map the historical lifted updates in $\mathcal H_t$ with the projection matrix $Q_t$ and construct
\begin{equation}
\begin{aligned}
Y_t^-&=\left[Q_t^\top z_{t-M},\ldots,Q_t^\top z_{t-2}\right]
=\left[\mathbf y_{t-M},\ldots,\mathbf y_{t-2}\right],\\
Y_t^+&=\left[Q_t^\top z_{t-M+1},\ldots,Q_t^\top z_{t-1}\right]
=\left[\mathbf y_{t-M+1},\ldots,\mathbf y_{t-1}\right].
\end{aligned}
\end{equation}
Each column pair thus represents one recent update transition before test step~$t$. We then fit a diagonal operator $A_t=\operatorname{diag}(\mathbf a_t)$ using scalar least-squares estimates:
\begin{equation}
a_{i,t}=\frac{\sum_{\tau=t-M}^{t-2}y_{i,\tau+1}y_{i,\tau}}
{\sum_{\tau=t-M}^{t-2}y_{i,\tau}^{2}},
\qquad i=1,\ldots,r,
\qquad \mathbf a_t=[a_{1,t},\ldots,a_{r,t}]^\top,
\label{eq:main_koopman}
\end{equation}
where $y_{i,\tau}$ is the $i$-th scalar entry of $\mathbf y_\tau$. The sign of $a_{i,t}$ distinguishes continuation from reversal, while its magnitude measures amplification in that coordinate.

\paragraph{Spectral radius: a magnitude signal.}
To quantify the magnitude expansion across steps, we compute the spectral radius $\rho_t=\rho(A_t)=\max |a_{t}|$. The larger $\rho_t$ indicates stronger local magnitude expansion at step~$t$. Since $A_t$ is estimated only from projected lifted updates before step~$t$, $\rho_t$ uses no future information and can be monitored online.

\paragraph{Three-component decomposition: a direction signal.}
To analyze the direction of each prompt update, $\mathtt{TPD}$ leverages Dynamic Mode Decomposition (DMD) \citep{schmid2010dynamic} under the local Koopman model and decomposes each projected update into three additive components. Let $\hat{\mathbf y}_t=A_t\mathbf y_{t-1}$ be the prediction and  $\mathbf r_t=\mathbf y_t-\hat{\mathbf y}_t$ be the prediction residual. From the Koopman operator $A_t$ and its prediction $\hat{\mathbf y}_t$, we derive two element-wise gates
\begin{equation}
\mathbf q_t =
\frac{\hat{\mathbf y}_t^{\,2}}
{\hat{\mathbf y}_t^{\,2}+\mathbf r_t^{\,2}},
\qquad
\boldsymbol\pi_t =
\frac{1+\tanh(\mathbf a_t)}{2},
\label{eq:main_ratios}
\end{equation}
where $\mathbf q_t\in[0,1]^r$ measures how well the local Koopman model explains $\mathbf y_t$, and $\boldsymbol\pi_t\in[0,1]^r$ distinguishes same-sign continuation from sign flipping, recovering the masks $\mathbb{1}[\mathbf a_t>0]$ and $\mathbb{1}[\mathbf a_t<0]$. Using these gates, the projected update admits the decomposition
\begin{equation}
\mathbf y_t
=
\underbrace{\mathbf q_t\boldsymbol\pi_t\mathbf y_t}_{\mathbf s_t^\star}
+
\underbrace{\mathbf q_t(1-\boldsymbol\pi_t)\mathbf y_t}_{\mathbf c_t^\star}
+
\underbrace{(1-\mathbf q_t)\mathbf y_t}_{\mathbf n_t^\star}.
\label{eq:main_components} 
\end{equation}
The persistent component $s_t^\star$ receives weight through both local predictability and same-sign continuation. The oscillatory component $c_t^\star$ captures locally explained reversal, while $n_t^\star$ contains content unexplained by the fitted dynamics. These are operational descriptions of the recent update trajectory. Oscillatory and residual components may still contain useful corrections, which motivates retaining them with nonzero routing weights (see Appendix~\ref{app:spectral_decomposition} for justification).

\subsection{Adaptive Koopman Control}

Given the magnitude signal $\rho_t$ from $\mathtt{TPD}$, $\mathtt{AKC}$ modulates the prompt-update magnitude through a multiplicative gain factor $g_t$ applied at each step:
\begin{equation}
g_t=
\begin{cases}
1, & \rho_t \le \bar\rho,\\[2pt]
\max\!\left(g_{\min},\,\dfrac{\bar\rho}{\rho_t}\right), & \rho_t > \bar\rho,
\end{cases}
\label{eq:main_gain}
\end{equation}
where $\bar\rho$ is a stability threshold and $g_{\min}\in(0,1)$ is a lower bound on the gain. $\mathtt{AKC}$ preserves the update when $\rho_t\le\bar\rho$ and scales it down by $g_t<1$ when $\rho_t>\bar\rho$. Across consecutive unstable steps, this scaling keeps the applied updates small while the local dynamics remain unstable (see Lemma~\ref{lem:diag_vs_full} for the comparison between diagonal and full modes).

\begin{lemma}[$\mathtt{AKC}$ non-amplification]
\label{lem:akc}
Let $A_t$ be the local Koopman operator in Eq.~\eqref{eq:main_koopman} and $g_t$ the gain factor in Eq.~\eqref{eq:main_gain}. Then $\rho(g_tA_t)\;=\;g_t\,\rho(A_t)\;\le\;\rho(A_t)$, with equality if and only if $\rho_t\le\bar\rho$. Moreover,
\begin{equation}
\rho(g_tA_t)\;\le\;\max\bigl(\bar\rho,\;g_{\min}\rho_t\bigr),
\label{eq:lem_akc_upper}
\end{equation}
and in particular $\rho(g_tA_t)\le\bar\rho$ whenever $\rho_t\le\bar\rho/g_{\min}$.

\noindent\textit{\textbf{Proof sketch.}}
$g_t$ scales the eigenvalues of $A_t$, and the remaining bounds follow by splitting the gain definition in Eq.~\eqref{eq:main_gain}. The gain floor prevents update shutdown, but it also means that the controlled spectral radius is not always below $\bar\rho$; the applicable bound is Eq.~\eqref{eq:lem_akc_upper}.
\end{lemma}

\subsection{Hankel Update Router}

Given the $\mathtt{TPD}$ three-component decomposition, $\mathtt{HUR}$ refines the update direction in the Hankel-lifted space and then extracts the most recent prompt-update block. Let $\mathbf s_t^\star,\mathbf c_t^\star,\mathbf n_t^\star\in\mathbb R^r$ collect the coordinate-wise persistent, oscillatory, and unexplained components. Let $\widetilde{\mathbf e}_t=(I_{Kd}-Q_tQ_t^\top)\mathbf z_t$ be the residual in the Hankel-lifted space and $\mathbf e_t=E_K\widetilde{\mathbf e}_t$ its extracted prompt-update block. We define the lifted components:
\begin{equation}
\widetilde{\mathbf S}_t=Q_t\mathbf s_t^\star,\qquad
\widetilde{\mathbf C}_t=Q_t\mathbf c_t^\star,\qquad
\widetilde{\mathbf N}_t=Q_t\mathbf n_t^\star+\widetilde{\mathbf e}_t.
\label{eq:main_lifted_components}
\end{equation}
We treat both the projected unexplained component $\mathbf n_t^\star$ and the residual $\widetilde{\mathbf e}_t$ as residual content. Their prompt-update-space counterparts are
\begin{equation}
\mathbf S_t=E_K\widetilde{\mathbf S}_t,\qquad
\mathbf C_t=E_K\widetilde{\mathbf C}_t,\qquad
\mathbf N_t=E_K\widetilde{\mathbf N}_t
=E_KQ_t\mathbf n_t^\star+\mathbf e_t .
\label{eq:main_lift}
\end{equation}
Thus $\mathbf u_t=\mathbf S_t+\mathbf C_t+\mathbf N_t$, where $\mathbf S_t,\mathbf C_t,\mathbf N_t\in\mathbb R^d$ are additive components induced by the projected Hankel decomposition. $\mathtt{HUR}$ routes the prompt-space components by preserving the persistent component and attenuating the oscillatory and residual components:
\begin{equation}
\mathbf u_t^{\mathrm{hur}}
=
\mathbf S_t+\beta_c\mathbf C_t+\beta_n\mathbf N_t,
\qquad
0\le\beta_n\le\beta_c\le1,
\label{eq:main_hur}
\end{equation}
where $\beta_c$ controls the oscillatory component and $\beta_n$ controls the residual component. Let $\widetilde{\mathbf z}^{\mathrm{hur}}_t$ denote the corresponding lifted routed update, obtained by applying the same routing in the Hankel-lifted space before extracting $\mathbf u_t^{\mathrm{hur}}=E_K\widetilde{\mathbf z}^{\mathrm{hur}}_t$. In the lifted space, $\mathtt{HUR}$ reweights each projected coordinate by a scalar in $[\beta_n,1]$ and scales the residual by $\beta_n$, thereby attenuating selected components without reversing projected coordinates or increasing their magnitude.

\subsection{Controlled Prompt Update}
\label{sec:controlled_update}

Combining the magnitude control of $\mathtt{AKC}$ with the direction refinement of $\mathtt{HUR}$, $\mathtt{TPD}$ converts the base prompt-update into a controlled prompt-update:
\begin{equation}
\mathbf u_t
\;\longrightarrow\;
\Delta\mathbf p_t=g_t\mathbf u_t^{\mathrm{hur}},
\label{eq:main_final_update}
\end{equation}
where $\Delta \mathbf p_t$ denotes the controlled prompt update applied to the prompts at the current test step $t$.
Finally, Lemma~\ref{lem:akc} and the routing form in Eq.~\eqref{eq:main_hur} give the following controlled-update properties.

\begin{theorem}[Controlled prompt-update properties]
\label{thm:main}
Whenever the local Hankel window is available and $\mathtt{TPD}$ is applied, the controlled update $\Delta\mathbf p_t=g_t\mathbf u_t^{\mathrm{hur}}$ in Eq.~\eqref{eq:main_final_update} satisfies:
\begin{enumerate}[label=(\roman*),leftmargin=2em,itemsep=2pt,topsep=2pt]
\item \textbf{Hankel-lifted space non-amplification:} $\|\widetilde{\mathbf z}^{\mathrm{hur}}_t\|\le\|\mathbf z_t\|$, therefore $\|\Delta\mathbf p_t\|\le g_t\|\mathbf z_t\|\le\|\mathbf z_t\|$.
\item \textbf{Mode-wise routing without reversal:} each projected Hankel coordinate is reweighted as $w_{i,t}y_{i,t}$ for some $w_{i,t}\in[\beta_n,1]$, so no projected coordinate is reversed or amplified.
\item \textbf{Persistent preservation in $\mathtt{HUR}$ routing:} within the defined Hankel decomposition, the persistent component $\widetilde{\mathbf S}_t$ and its prompt-space extraction $\mathbf S_t$ are transmitted with coefficient $1$, while the oscillatory and residual components are attenuated by factors $\beta_c$ and $\beta_n$.
\end{enumerate}
\noindent\textit{\textbf{Proof sketch.}}
The proof follows from the weights $w_{i,t}\in[\beta_n,1]$ in the Hankel subspace, the decomposition $\mathbf z_t=Q_t\mathbf y_t+\widetilde{\mathbf e}_t$, and the bound $\|E_K\|_2=1$ (see Appendix~\ref{app:proof_theorem} for the full proof).
\end{theorem}

Since both modules act directly on the prompt update $\mathbf u_t$, $\mathtt{TPD}$ can be attached to different TPT methods while preserving their original update rule. Together, $\mathtt{AKC}$ and $\mathtt{HUR}$ regulate the prompt-update trajectory by combining magnitude control with component-wise direction routing.

\section{Experiments}
\subsection{Experimental Setup}
\label{sec:exp_setup}

\noindent\textbf{Datasets.}
Following previous TPT protocols~\citep{shu2022test}, we evaluate $\mathtt{TPD}$ on image classification under two distribution-shift settings, using ImageNet-pretrained backbones as source models and adapting them at test time on target datasets. The domain generalization benchmark includes ImageNet~\citep{deng2009imagenet} and four shifted variants: ImageNet-V2~\citep{recht2019imagenet}, ImageNet-Sketch~\citep{wang2019learning}, ImageNet-A~\citep{hendrycks2021natural}, and ImageNet-R~\citep{hendrycks2021many}. The cross-dataset benchmark contains 10 downstream datasets covering fine-grained recognition, textures, scenes, satellite images, and actions (see Appendix~\ref{app:exp_details} for the full details).

\noindent\textbf{Baselines.}
We attach $\mathtt{TPD}$ to TPT methods on two backbone families. For ViT-based visual prompt tuning, we evaluate four methods: VPT \citep{jia2022visual}, E2VPT \citep{han20232vpt}, VFPT \citep{zeng2024visual}, BPT \citep{wang2025attention}. For CLIP-based text prompt tuning, we evaluate both episodic and online methods: TPT \citep{shu2022test}, C-TPT \citep{yoon2024c}, R-TPT \citep{sheng2025r}, O-TPT \citep{sharifdeen2025tpt}, CAT \citep{zhang2025cat}, DyPt \citep{xiao2025dynaprompt}. We report both the original result and its gain with $\mathtt{TPD}$, denoted as ``Baseline~\parenclrbf{+$\mathtt{TPD}$}''.

\begin{table*}[t]
\caption{Comparisons on the domain generalization setting with visual and text TPT methods. OOD Mean averages the 4 variants excluding ImageNet, while Mean averages ImageNet and the 4 variants. Values are the baseline accuracy with $\mathtt{TPD}$ gains.}
\label{tab:dg}
\begin{center}
\begingroup
\renewcommand{\parenclrbf}[1]{{\bfseries\textcolor{MyText}{#1}}}
\fontsize{6pt}{9.5pt}\selectfont
\setlength{\tabcolsep}{5.0pt}
\renewcommand{\arraystretch}{0.85}
\resizebox{0.9\textwidth}{!}{%
\begin{tabular}{l ccccccc}
\toprule
Method & ImageNet & ImageNet-V2 & ImageNet-S & ImageNet-A & ImageNet-R & \textit{OOD Mean} & \textit{Mean} \\
\midrule
\rowcolor{MyBlue}
\multicolumn{8}{c}{\textit{ViT-based visual prompt tuning}} \\
\rowcolor{gray!12}
ViT-B/16
& 81.44 & 70.85 & 35.44 & 30.92 & 47.48 & 46.17 & 53.23 \\
VPT \parenclrbf{+$\mathtt{TPD}$}
& 78.16 \parenclrbf{+3.15} & 64.55 \parenclrbf{+5.32} & 32.77 \parenclrbf{+3.25} & 19.17 \parenclrbf{+19.81} & 42.14 \parenclrbf{+4.33} & 39.66 \parenclrbf{+8.18} & 47.36 \parenclrbf{+7.17} \\
E2VPT \parenclrbf{+$\mathtt{TPD}$}
& 78.35 \parenclrbf{+2.69} & 64.09 \parenclrbf{+4.93} & 32.89 \parenclrbf{+2.70} & 18.17 \parenclrbf{+20.79} & 42.07 \parenclrbf{+3.94} & 39.31 \parenclrbf{+8.09} & 47.11 \parenclrbf{+7.01} \\
VFPT \parenclrbf{+$\mathtt{TPD}$}
& 78.71 \parenclrbf{+2.62} & 65.34 \parenclrbf{+4.75} & 33.16 \parenclrbf{+2.81} & 22.05 \parenclrbf{+18.68} & 42.81 \parenclrbf{+3.55} & 40.84 \parenclrbf{+7.45} & 48.41 \parenclrbf{+6.48} \\
BPT \parenclrbf{+$\mathtt{TPD}$}
& 81.30 \parenclrbf{+2.87} & 70.48 \parenclrbf{+2.45} & 28.62 \parenclrbf{+9.93} & 38.97 \parenclrbf{+4.23} & 43.68 \parenclrbf{+5.39} & 45.44 \parenclrbf{+5.51} & 52.61 \parenclrbf{+4.97} \\
\midrule
\rowcolor{MyBlue}
\multicolumn{8}{c}{\textit{CLIP-based text prompt tuning}} \\
\rowcolor{gray!12}
CLIP 
& 66.73 & 60.86 & 46.09 & 47.87 & 73.98 & 57.20 & 59.11 \\
TPT \parenclrbf{+$\mathtt{TPD}$}
& 68.98 \parenclrbf{+2.21} & 63.45 \parenclrbf{+2.53} & 47.94 \parenclrbf{+1.68} & 54.77 \parenclrbf{+1.96} & 77.06 \parenclrbf{+2.92} & 60.81 \parenclrbf{+2.27} & 62.44 \parenclrbf{+2.26} \\
C-TPT \parenclrbf{+$\mathtt{TPD}$}
& 69.28 \parenclrbf{+2.35} & 63.36 \parenclrbf{+1.83} & 48.55 \parenclrbf{+2.02} & 52.93 \parenclrbf{+1.67} & 78.97 \parenclrbf{+3.85} & 60.95 \parenclrbf{+2.35} & 62.62 \parenclrbf{+2.34} \\
R-TPT \parenclrbf{+$\mathtt{TPD}$}
& 60.92 \parenclrbf{+5.08} & 64.91 \parenclrbf{+2.61} & 45.03 \parenclrbf{+2.36} & 48.92 \parenclrbf{+2.48} & 77.63 \parenclrbf{+3.01} & 59.12 \parenclrbf{+2.62} & 59.48 \parenclrbf{+3.11} \\
O-TPT \parenclrbf{+$\mathtt{TPD}$}
& 67.33 \parenclrbf{+4.52} & 61.65 \parenclrbf{+1.96} & 47.12 \parenclrbf{+1.81} & 49.87 \parenclrbf{+1.74} & 72.55 \parenclrbf{+4.59} & 57.80 \parenclrbf{+2.53} & 59.70 \parenclrbf{+2.92} \\
CAT \parenclrbf{+$\mathtt{TPD}$}
& 69.32 \parenclrbf{+2.50} & 60.91 \parenclrbf{+2.34} & 52.08 \parenclrbf{+2.23} & 58.69 \parenclrbf{+2.51} & 76.76 \parenclrbf{+3.32} & 62.11 \parenclrbf{+2.60} & 63.55 \parenclrbf{+2.58} \\
DyPt \parenclrbf{+$\mathtt{TPD}$}
& 69.61 \parenclrbf{+2.55} & 64.63 \parenclrbf{+2.47} & 48.22 \parenclrbf{+2.18} & 56.16 \parenclrbf{+2.39} & 78.15 \parenclrbf{+3.73} & 61.79 \parenclrbf{+2.69} & 63.35 \parenclrbf{+2.66} \\
\bottomrule
\end{tabular}}
\endgroup
\end{center}
\vspace{-5pt}
\end{table*}

\noindent\textbf{Implementation Details.}
We use $K=3$ updates per Hankel embedding and $M=8$ embeddings, giving $W=10$ past updates and seven transition pairs. These short windows balance temporal context with local estimation. The rank $r$ is selected adaptively from the current singular spectrum. We set $\bar\rho=1$, $g_{\min}=0.1$, and $(\beta_c,\beta_n)=(0.25,0.1)$ across all datasets, methods, and backbone families. The spectral threshold corresponds to the unit-circle boundary of the local modes; the gain floor avoids update shutdown, and the routing weights retain nonzero contributions from oscillatory and residual content. All experiments use an NVIDIA H100 GPU (see Appendix~\ref{app:hyper_sensitivity} for per-dataset hyperparameter experimental results).

\begin{table*}[t]
\caption{Comparisons on the cross-dataset setting with visual and text TPT methods. All methods are evaluated on 10 downstream datasets without training. Values are baseline accuracy with $\mathtt{TPD}$ gains.}
\label{tab:cds}
\begin{center}
\begingroup
\setlength{\tabcolsep}{1.8pt}
\renewcommand{\arraystretch}{1.45}
\resizebox{\textwidth}{!}{%
\begin{tabular}{l ccccccccccc}
\toprule
Method & Caltech101 & OxfordPets & StanfordCars & Flowers102 & Food101 & Aircraft & SUN397 & DTD & EuroSAT & UCF101 & \textit{Mean} \\
\midrule
\rowcolor{MyBlue}
\multicolumn{12}{c}{\textit{ViT-based visual prompt tuning}} \\
\rowcolor{gray!12}
ViT-B/16 
& 86.71 & 92.20 & 57.84 & 66.23 & 82.78 & 22.06 & 56.07 & 48.19 & 49.91 & 65.08 & 62.71 \\
VPT \parenclrbf{+$\mathtt{TPD}$}
& 92.65 \parenclrbf{+2.72} & 86.89 \parenclrbf{+2.49} & 60.68 \parenclrbf{+2.13} & 64.86 \parenclrbf{+3.23} & 85.14 \parenclrbf{+1.31} & 21.21 \parenclrbf{+3.25} & 59.92 \parenclrbf{+3.87} & 38.19 \parenclrbf{+4.06} & 20.91 \parenclrbf{+11.26} & 59.28 \parenclrbf{+2.73} & 58.97 \parenclrbf{+3.71} \\
E2VPT \parenclrbf{+$\mathtt{TPD}$}
& 92.29 \parenclrbf{+2.01} & 86.89 \parenclrbf{+2.75} & 60.03 \parenclrbf{+2.08} & 63.85 \parenclrbf{+3.96} & 84.55 \parenclrbf{+1.26} & 20.79 \parenclrbf{+3.51} & 59.07 \parenclrbf{+3.48} & 37.93 \parenclrbf{+5.33} & 17.29 \parenclrbf{+11.67} & 58.75 \parenclrbf{+2.81} & 58.14 \parenclrbf{+3.89} \\
VFPT \parenclrbf{+$\mathtt{TPD}$}
& 92.76 \parenclrbf{+2.36} & 87.33 \parenclrbf{+2.71} & 62.01 \parenclrbf{+2.41} & 67.77 \parenclrbf{+1.98} & 85.18 \parenclrbf{+1.43} & 21.69 \parenclrbf{+3.33} & 59.31 \parenclrbf{+2.64} & 40.53 \parenclrbf{+3.43} & 23.54 \parenclrbf{+12.66} & 60.26 \parenclrbf{+1.43} & 60.04 \parenclrbf{+3.44} \\
BPT \parenclrbf{+$\mathtt{TPD}$}
& 79.51 \parenclrbf{+13.89} & 68.26 \parenclrbf{+14.45} & 49.88 \parenclrbf{+11.15} & 64.82 \parenclrbf{+2.33} & 67.45 \parenclrbf{+15.21} & 21.47 \parenclrbf{+4.15} & 48.50 \parenclrbf{+7.84} & 41.46 \parenclrbf{+1.96} & 57.42 \parenclrbf{+1.71} & 50.52 \parenclrbf{+9.59} & 54.93 \parenclrbf{+8.23} \\
\midrule
\rowcolor{MyBlue}
\multicolumn{12}{c}{\textit{CLIP-based text prompt tuning}} \\
\rowcolor{gray!12}
CLIP 
& 93.35 & 88.25 & 65.48 & 67.44 & 83.65 & 23.67 & 62.59 & 44.27 & 42.01 & 65.13 & 63.58 \\
TPT \parenclrbf{+$\mathtt{TPD}$}
& 94.16 \parenclrbf{+2.16} & 87.79 \parenclrbf{+2.52} & 66.87 \parenclrbf{+1.68} & 68.98 \parenclrbf{+2.08} & 84.67 \parenclrbf{+3.65} & 24.78 \parenclrbf{+2.41} & 65.50 \parenclrbf{+3.01} & 47.75 \parenclrbf{+4.23} & 42.44 \parenclrbf{+3.59} & 68.04 \parenclrbf{+3.96} & 65.10 \parenclrbf{+2.93} \\
C-TPT \parenclrbf{+$\mathtt{TPD}$}
& 93.62 \parenclrbf{+1.93} & 88.23 \parenclrbf{+2.27} & 65.82 \parenclrbf{+1.31} & 69.84 \parenclrbf{+2.51} & 83.73 \parenclrbf{+4.62} & 24.02 \parenclrbf{+2.39} & 64.83 \parenclrbf{+3.28} & 46.04 \parenclrbf{+2.49} & 43.21 \parenclrbf{+2.87} & 65.74 \parenclrbf{+1.48} & 64.51 \parenclrbf{+2.52} \\
R-TPT \parenclrbf{+$\mathtt{TPD}$}
& 93.67 \parenclrbf{+2.09} & 87.24 \parenclrbf{+2.43} & 67.03 \parenclrbf{+1.34} & 68.71 \parenclrbf{+2.13} & 83.94 \parenclrbf{+4.02} & 23.92 \parenclrbf{+2.27} & 64.37 \parenclrbf{+3.61} & 46.43 \parenclrbf{+3.26} & 34.72 \parenclrbf{+3.08} & 67.23 \parenclrbf{+2.66} & 63.73 \parenclrbf{+2.69} \\
O-TPT \parenclrbf{+$\mathtt{TPD}$}
& 93.95 \parenclrbf{+2.22} & 87.95 \parenclrbf{+2.38} & 64.53 \parenclrbf{+1.24} & 70.07 \parenclrbf{+2.24} & 84.13 \parenclrbf{+4.18} & 23.64 \parenclrbf{+2.17} & 64.23 \parenclrbf{+3.49} & 45.68 \parenclrbf{+3.30} & 42.84 \parenclrbf{+3.15} & 64.16 \parenclrbf{+2.58} & 64.12 \parenclrbf{+2.70} \\
CAT \parenclrbf{+$\mathtt{TPD}$}
& 94.72 \parenclrbf{+2.36} & 90.34 \parenclrbf{+2.65} & 69.41 \parenclrbf{+1.52} & 72.73 \parenclrbf{+2.43} & 86.24 \parenclrbf{+4.41} & 24.13 \parenclrbf{+2.52} & 67.02 \parenclrbf{+3.88} & 47.24 \parenclrbf{+3.59} & 50.53 \parenclrbf{+3.46} & 69.52 \parenclrbf{+2.96} & 67.19 \parenclrbf{+2.98} \\
DyPt \parenclrbf{+$\mathtt{TPD}$}
& 94.32 \parenclrbf{+2.28} & 88.28 \parenclrbf{+2.61} & 67.65 \parenclrbf{+1.43} & 69.95 \parenclrbf{+2.18} & 85.42 \parenclrbf{+4.08} & 24.33 \parenclrbf{+2.36} & 66.32 \parenclrbf{+3.72} & 47.96 \parenclrbf{+3.64} & 42.28 \parenclrbf{+3.22} & 68.72 \parenclrbf{+3.08} & 65.52 \parenclrbf{+2.86} \\
\bottomrule
\end{tabular}}
\endgroup
\end{center}
\end{table*}

\subsection{Results}
For ViT-based visual TPT methods, $\mathtt{TPD}$ consistently improves all baselines under both domain generalization and cross-dataset transfer, as shown in Table~\ref{tab:dg} and Table~\ref{tab:cds}. The shifted-domain OOD mean gains range from $5.51$-$8.18\%$, while the cross-dataset mean gains range from $3.44$-$8.23\%$. The improvements from adding $\mathtt{TPD}$ are especially large under hard shifts; for example, VPT gains $19.81\%$ on ImageNet-A, and BPT gains $8.23\%$ on average across cross-dataset transfer. This suggests that training-time visual prompts remain vulnerable when reused for naive test-time entropy updates, where update magnitudes may expand and directions may oscillate or drift. These larger gains therefore reflect the lack of dedicated TTA stabilizers for ViT prompt tuning: $\mathtt{TPD}$ supplies the missing trajectory control and turns otherwise unstable test-time tuning into effective adaptation.

For CLIP-based text TPT methods, $\mathtt{TPD}$ also improves all episodic and online baselines in Table~\ref{tab:dg} and Table~\ref{tab:cds}. On domain generalization, the OOD mean gains are consistently between $2.27\%$ and $2.69\%$; on cross-dataset transfer, the mean gains range from $2.52\%$ to $2.98\%$. For example, DyPt improves from $61.79\%$ to $64.48\%$ in the domain generalization OOD mean. 
Although these gains are more moderate than those for visual prompts, they remain highly consistent across different adaptation designs. By acting directly on the base update vectors, $\mathtt{TPD}$ stabilizes text-prompt trajectories without replacing the original text-prompt objective. This consistency further shows that $\mathtt{TPD}$ serves as a prompt-agnostic module complementary to text TPT methods.

\begin{wraptable}{r}{0.48\textwidth}
\vspace{-12pt}
\caption{Comparison with simple stabilization methods on visual and text TPT methods. DG and CSD denote the accuracy means over four ImageNet variants and 10 cross-datasets. $\times$: per-step wall-clock overhead relative to the baseline.}
\label{tab:simple_stabilization}
\begin{center}
\scriptsize
\setlength{\tabcolsep}{2.5pt}
\begin{tabular}{l ccc ccc}
\toprule
& \multicolumn{3}{c}{VPT} & \multicolumn{3}{c}{TPT} \\
\cmidrule(lr){2-4} \cmidrule(lr){5-7}
Variant & \textit{DG} & \textit{CSD} & $\times$ & \textit{DG} & \textit{CSD} & $\times$ \\
\midrule
Baseline    & 39.66 & 58.97 & 1.00 & 60.81 & 65.10 & 1.00 \\
\rowcolor{MyBlue}
\multicolumn{7}{c}{\textit{Simple stabilization rules}} \\
+ Update clipping   & 41.92 & 59.54 & 1.11 & 61.36 & 65.47 & 1.06 \\
+ EMA update   & 42.74 & 59.86 & 0.99 & 61.65 & 65.82 & 1.01 \\
+ Norm control   & 43.65 & 61.03 & 1.07 & 61.72 & 65.74 & 1.03 \\
+ Low-pass filtering  & 44.28 & 60.91 & 1.00 & 62.23 & 66.23 & 1.00 \\
\rowcolor{MyBlue}
\multicolumn{7}{c}{\textit{Prompt-update decomposition and stabilization}} \\
+$\mathtt{TPD}$  & 47.84 & 62.68 & 1.13 & 63.08 & 68.03 & 1.08 \\
\bottomrule
\end{tabular}
\end{center}
\end{wraptable}

We further compare $\mathtt{TPD}$ with simple stabilization rules in Table \ref{tab:simple_stabilization}, including update clipping~\citep{pascanu2013difficulty}, EMA updates~\citep{tarvainen2017mean}, norm control~\citep{niu2023towards}, and low-pass filtering~\citep{zhang2019lookahead}.
These heuristics improve the TPT baselines but remain below $\mathtt{TPD}$: on VPT, low-pass filtering improves DG from $39.66$ to $44.28$, while $\mathtt{TPD}$ reaches $47.84$; on TPT, it improves CSD from $65.10$ to $66.23$, whereas $\mathtt{TPD}$ reaches $68.03$. 
$\mathtt{TPD}$ also incurs modest overhead, with $1.13\times$ on VPT and $1.08\times$ on TPT. These results suggest that simple scale control or short-term smoothing is insufficient, while $\mathtt{TPD}$ further uses component-wise routing to stabilize prompt-update dynamics across both visual and text prompts under TTA (see Appendix~\ref{app:full_per_dataset} for the per-dataset simple-stabilization results).

\begin{wraptable}{r}{0.45\textwidth}
\vspace{-13pt}
\caption{Ablation study of $\mathtt{TPD}$ on visual and text TPT methods. DG and CSD denote the accuracy means over four ImageNet variants and 10 cross-datasets.}
\label{tab:ablation}
\begin{center}
\scriptsize
\begin{tabular}{l cc cc}
\toprule
& \multicolumn{2}{c}{VPT} & \multicolumn{2}{c}{TPT} \\
\cmidrule(lr){2-3} \cmidrule(lr){4-5}
Variant & \textit{DG} & \textit{CSD} & \textit{DG} & \textit{CSD} \\
\midrule
Baseline  & 39.66 & 58.97 & 60.81 & 65.10 \\
\rowcolor{MyBlue}
\multicolumn{5}{c}{\textit{Ablation study of Koopman operator modes}} \\
Full    & 43.58 & 60.86 & 61.80 & 66.38 \\
Diagonal (+$\mathtt{AKC}$)  & 45.92 & 61.74 & 62.56 & 67.21 \\
\rowcolor{MyBlue}
\multicolumn{5}{c}{\textit{Ablation study of projection matrix modes}} \\
Plain & 42.31 & 60.22 & 61.43 & 66.01 \\
Hankel (+$\mathtt{HUR}$)   & 45.98 & 62.18 & 62.84 & 67.54 \\
\rowcolor{MyBlue}
\multicolumn{5}{c}{\textit{Ablation study of $\mathtt{HUR}$ routing}} \\
w/o Pres. & 47.06 & 62.29 & 62.95 & 67.73 \\
w/o Osc.  & 47.28 & 62.37 & 62.84 & 67.87   \\
w/o Res.  & 46.84 & 62.21 & 62.80 & 67.54   \\
\rowcolor{MyBlue}
\multicolumn{5}{c}{\textit{Full}} \\
+$\mathtt{TPD}$ & 47.84 & 62.68 & 63.08 & 68.03 \\
\bottomrule
\end{tabular}
\end{center}
\vspace{-3pt}
\end{wraptable}

\subsection{Ablation Study}
We conduct ablations on ViT-based VPT and CLIP-based TPT to examine three design choices in $\mathtt{TPD}$: the Koopman operator for $\mathtt{AKC}$, the projection matrix for $\mathtt{HUR}$, and the routing weights in $\mathtt{HUR}$. The first block compares a full Koopman mode with the proposed diagonal mode in Table~\ref{tab:ablation}. 
The diagonal mode performs better, improving VPT-DG from $43.58$ to $45.92$ and TPT-CSD from $66.38$ to $67.21$. This shows the short Hankel window makes cross-step coupling difficult to estimate reliably, while the diagonal mode provides stable scalar coefficients for spectral control.

The second block compares plain projection with the Hankel embedding used by $\mathtt{HUR}$. Hankel projection consistently outperforms plain projection, improving VPT-DG from $42.31$ to $45.98$ and TPT-CSD from $66.01$ to $67.54$, indicating that delay embedding captures cross-step update dynamics missed by a static projection. 
Furthermore, the third block ablates $\mathtt{HUR}$ routing: w/o Pres., w/o Osc., and w/o Res. use $(\beta_c,\beta_c,\beta_n)$, $(1,1,\beta_n)$, and $(1,\beta_c,1)$, respectively, instead of the full $(1,\beta_c,\beta_n)$. All three variants underperform $\mathtt{TPD}$ across VPT and TPT, supporting the routing strategy that preserves persistent directions, softly attenuates oscillatory directions, and strongly suppresses unexplained perturbations (see Appendix~\ref{app:abla_per_dataset} for the full ablation results).

\subsection{Analysis}

To quantify the stability of visual prompt-update trajectories, we visualize the Koopman spectra and decomposition dynamics in Figure~\ref{fig3}. VPT concentrates most spectral mass near the origin, indicating that many update modes decay rapidly rather than carrying persistent adaptation. This does not contradict the large $\rho$ values in Figure~\ref{fig3}(c), since the spectrum aggregates eigenvalue density over the trajectory, while $\rho$ tracks the largest local eigenvalue at each step. Thus, a trajectory can be dominated by decaying modes overall but still contain intermittent expansive steps. Consistently, VPT shows residual-dominated directions and frequent excursions of $\rho$ beyond the stable reference, suggesting that its updates are both noisy in direction and unstable in magnitude. After applying $\mathtt{TPD}$, the spectrum shifts toward the unit circle and becomes more evenly distributed, indicating that useful modes are preserved. Meanwhile, the persistent ratio increases, oscillatory and residual ratios decrease, and $\rho$ stays closer to the stable regime. These results show that $\mathtt{HUR}$ improves the directional composition of updates, while $\mathtt{AKC}$ controls their magnitude expansion.

\begin{figure}[t]
    \centering
    \includegraphics[width=\linewidth]{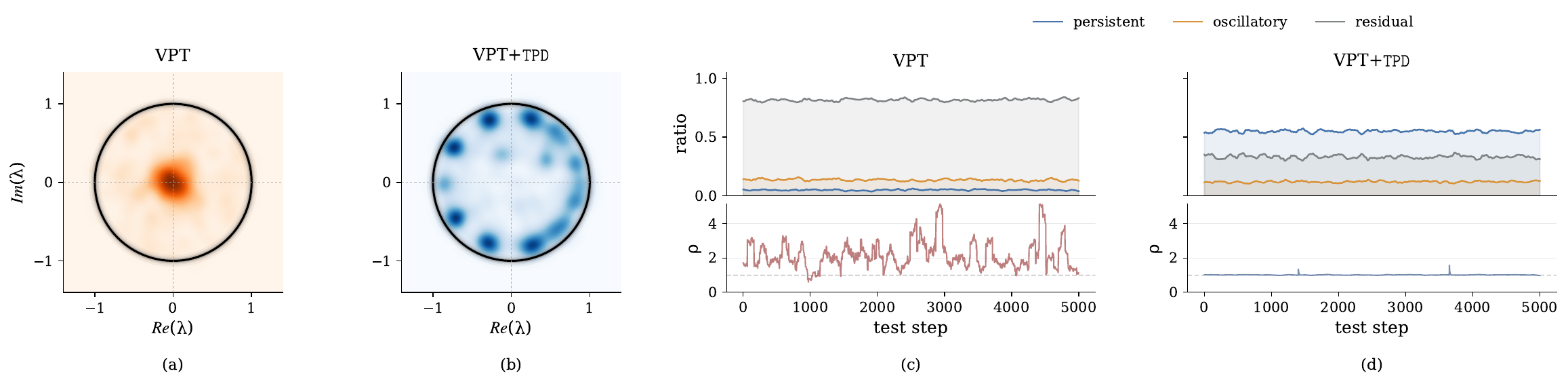}
    \caption{(a), (b) show Koopman spectra of visual prompt trajectories. The spectra are estimated post-hoc from generated update trajectories, and the black circle denotes the unit circle. (c), (d) show component ratios and spectral radius $\rho$ over test steps.}
    \label{fig3}
\end{figure}

\begin{figure}[t]
    \centering
    \includegraphics[width=\linewidth]{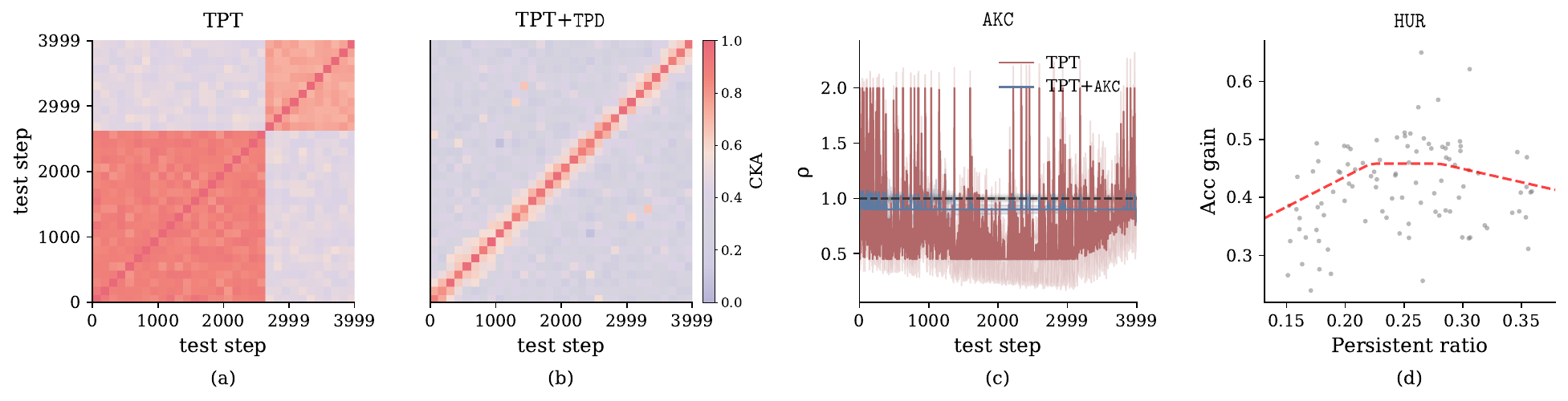}
    \caption{(a), (b) CKA similarity between text-prompt representations across test steps after smoothing. (c) Koopman spectral radius estimated from prompt updates, the estimate is post-hoc for TPT and used online by $\mathtt{AKC}$. (d) Persistent ratio of $\mathtt{HUR}$ versus subsequent accuracy gain. Each gray point denotes one short temporal window. The red dashed curve is a piecewise linear fitted trend line.}
    \label{fig4}
\end{figure}

We further analyze text TPT methods after adding $\mathtt{TPD}$. The CKA~\citep{kornblith2019similarity} matrices measure pairwise similarity between text-prompt representations across test steps. In Figure~\ref{fig4}(a), TPT forms coarse high-similarity blocks, indicating that prompts collapse into a few similar states within local segments and change abruptly across segments. In contrast, TPT+$\mathtt{TPD}$ preserves a sharper near-diagonal structure in Figure~\ref{fig4}(b), suggesting smoother stepwise evolution without excessive cross-step collapse. Figure~\ref{fig4}(c) further shows that the Koopman spectral radius of TPT fluctuates strongly and frequently exceeds $\rho=1$, whereas $\mathtt{AKC}$ keeps the radius closer to the stable reference. Finally, Figure~\ref{fig4}(d) links the direction signal to performance: accuracy gain generally increases from low to moderate persistent ratios and then reaches a peak. This trend supports the use of persistent ratio as a routing signal for $\mathtt{HUR}$, since low persistence corresponds to oscillatory or residual-dominated updates, while moderate persistence reflects useful adaptation structure (see Appendix~\ref{app:persistent_gain} for the visualizations on all datasets).

\begin{figure}[t]
    \centering
    \includegraphics[width=\linewidth]{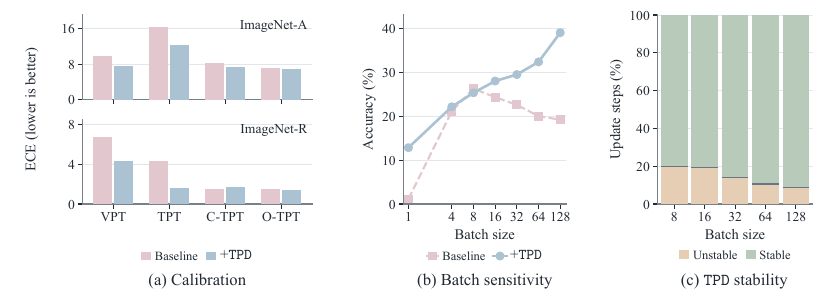}
    \caption{(a) ECE on ImageNet-A and ImageNet-R, with separate vertical scales. (b) Accuracy across batch sizes. Pink and blue in (a,b) denote each baseline and its $\mathtt{TPD}$ variant, respectively. (c) Proportions of stable ($\rho_t\leq1$, light green) and unstable ($\rho_t>1$, light orange) $\mathtt{TPD}$ update steps, where $\rho_t$ denotes the estimated spectral radius; each stacked bar sums to $100\%$.}
    \label{fig5}
\end{figure}

\subsection{Robustness and Reliability}

We evaluate calibration across four prompt-tuning methods on ImageNet-A and ImageNet-R (Fig.~\ref{fig5}(a)). $\mathtt{TPD}$ reduces expected calibration error (ECE) in seven of eight comparisons, with C-TPT on ImageNet-R as the only exception. For TPT, ECE decreases from $16.42\%$ to $12.27\%$ on ImageNet-A and from $4.36\%$ to $1.62\%$ on ImageNet-R. These improvements suggest that controlling prompt dynamics also regularizes predictive confidence, extending the benefits of stabilization to the alignment between confidence and empirical accuracy across multiple prompt-tuning formulations.

We examine VPT batch-size sensitivity on ImageNet-A with top-$10\%$ confidence filtering (Fig.~\ref{fig5}(b)). $\mathtt{TPD}$ improves accuracy in six of seven settings, with a decrease at $B=8$. At $B=1$, accuracy increases from $1.14\%$ to $12.82\%$, and at $B=128$, from $19.17\%$ to $38.98\%$. $\mathtt{TPD}$ improves over the baseline in most small-batch settings, although absolute accuracy remains low at very small batch sizes. Moreover, baseline accuracy declines beyond $B=8$, while $\mathtt{TPD}$ accuracy continues to improve with increasing batch size. This divergent trend highlights the importance of update regulation in converting a larger sample budget into more effective adaptation.

We further assess the proportions of spectrally stable and unstable $\mathtt{TPD}$ updates (Fig.~\ref{fig5}(c)). The stable fraction increases from $80.07\%$ at $B=8$ to $91.16\%$ at $B=128$, with a consistent upward trend across the five measured batch sizes. This pattern suggests that larger batches provide a stronger basis for estimating and regulating local dynamics, reducing the frequency of unstable update behavior. Viewed alongside the accuracy trends, it supports a link between the reliability of local dynamics and adaptation quality. The stable fraction complements accuracy by characterizing the update process itself, allowing predictive performance and dynamical reliability to be assessed separately (see Appendices~\ref{app:rb_selected_streams}--\ref{app:rb_selected_controllers}
for results on nonstationary streams, component ratios,
seed variation, and adaptive magnitude controllers). 

\section{Conclusion}
We propose $\mathtt{TPD}$, which characterizes observed prompt-update trajectories from a dynamical-systems view and decomposes base prompt updates into magnitude and direction.
Specifically, $\mathtt{TPD}$ projects recent updates into a Hankel subspace, $\mathtt{AKC}$ constrains unstable update magnitudes, and $\mathtt{HUR}$ preserves persistent components while attenuating oscillatory and residual components.
As a prompt-agnostic framework, $\mathtt{TPD}$ operates directly on base prompt updates and serves as a plug-in stabilizer without modifying the backbone, prompt architecture, or adaptation loss.
Across visual and text prompts, $\mathtt{TPD}$ consistently improves TPT baselines under domain generalization and cross-dataset settings.

\textbf{Limitation.} $\mathtt{TPD}$ relies on a meaningful recent update history and may be less effective when updates are sparse, noisy, or rapidly changing. Its finite-window local model does not guarantee global convergence or semantic correctness of retained directions, and soft routing may attenuate useful corrections. Small-batch adaptation also remains challenging.

\newpage
\bibliographystyle{plainnat}
\bibliography{ref}

%%%%%%%%%%%%%%%%%%%%%%%%%%%%%%%%%%%%%%%%%%%%%%%%%%%%%%%%%%%%
\newpage
\appendix

\section{Spectral Justification of the Three-Component Decomposition}
\label{app:spectral_decomposition}

This section justifies the three-component decomposition in Eq.~\eqref{eq:main_components} from the Hankel-DMD spectral view. The key point is that $\mathtt{AKC}$ and $\mathtt{HUR}$ use different aspects of the same local Koopman spectrum. The spectral radius $\rho(A_t)$ depends on eigenvalue magnitudes and diagnoses local magnitude expansion for $\mathtt{AKC}$. In contrast, the persistent and oscillatory split depends on spectral phase, or on the sign of each scalar coefficient under the diagonal local model, and is used by $\mathtt{HUR}$ to route update directions.

\begin{lemma}[Hankel-DMD spectral reconstruction]
\label{lem:hankel_dmd_spectrum}
Consider the projected Hankel trajectory in the local subspace at test step $t$, and suppose that within the short window it is represented by a finite-dimensional local Koopman model
\begin{equation}
\mathbf y_{\tau+1}\approx A_t\mathbf y_{\tau}.
\label{eq:app_local_linear}
\end{equation}
If $A_t$ is diagonalizable with eigenpairs $\{(\lambda_j,\boldsymbol\phi_j)\}_{j=1}^r$, then the local DMD reconstruction can be written as
\begin{equation}
\mathbf y_{\tau}
\approx
\sum_{j=1}^{r} b_j\boldsymbol\phi_j\lambda_j^{\tau},
\qquad
\lambda_j=|\lambda_j|e^{\mathrm i\theta_j},
\qquad
\omega_j=\log(\lambda_j)=\alpha_j+\mathrm i\beta_j.
\label{eq:app_dmd_reconstruction}
\end{equation}
Here $|\lambda_j|$ (or $\alpha_j$) controls growth or decay, while $\theta_j$ (or $\beta_j$) controls phase rotation and hence oscillatory behavior.
\end{lemma}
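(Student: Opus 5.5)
The argument is standard linear-systems bookkeeping: iterate the local model, diagonalize $A_t$, and read off modulus and phase of each eigenvalue. I will treat the ``$\approx$'' in Eq.~\eqref{eq:app_local_linear} as exact for the derivation. The approximation error then propagates only through the finitely many steps of the short window, of which there are at most $M-1$. I will index time relative to a reference step $\tau_0$ inside the window, for example $\tau_0=t-M$. With this convention the exponent $\tau$ in Eq.~\eqref{eq:app_dmd_reconstruction} means $\tau-\tau_0$, which keeps the constants $b_j$ well defined.

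\textbf{Step 1: iterate the model.} Applying $\mathbf y_{\tau+1}=A_t\mathbf y_\tau$ repeatedly gives $\mathbf y_{\tau}=A_t^{\tau-\tau_0}\mathbf y_{\tau_0}$.

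\textbf{Step 2: diagonalize.} Since $A_t$ is diagonalizable, the eigenvectors $\boldsymbol\phi_1,\dots,\boldsymbol\phi_r$ form a basis of $\mathbb C^r$. I will therefore expand the initial state as $\mathbf y_{\tau_0}=\sum_j b_j\boldsymbol\phi_j$, with the coefficients given by
\begin{equation*}
\mathbf b=\Phi^{-1}\mathbf y_{\tau_0},\qquad \Phi=[\boldsymbol\phi_1,\dots,\boldsymbol\phi_r].
\end{equation*}
Linearity together with $A_t\boldsymbol\phi_j=\lambda_j\boldsymbol\phi_j$ then yields $\mathbf y_\tau=\sum_j b_j\boldsymbol\phi_j\lambda_j^{\tau-\tau_0}$, which is Eq.~\eqref{eq:app_dmd_reconstruction}.

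In the diagonal case used by $\mathtt{TPD}$, where $A_t=\operatorname{diag}(\mathbf a_t)$ from Eq.~\eqref{eq:main_koopman}, this is explicit. The eigenvectors are the standard basis vectors, $\lambda_j=a_{j,t}$ is real, and $\theta_j\in\{0,\pi\}$ according to the sign of $a_{j,t}$. This is exactly the split between continuation and sign flip that the gate $\boldsymbol\pi_t$ in Eq.~\eqref{eq:main_ratios} encodes.

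\textbf{Step 3: interpret modulus and phase.} For nonzero eigenvalues I write the polar form $\lambda_j=|\lambda_j|e^{\mathrm i\theta_j}$. Then
\begin{equation*}
\lambda_j^{n}=|\lambda_j|^n e^{\mathrm i n\theta_j}=e^{n\alpha_j}e^{\mathrm i n\beta_j},
\end{equation*}
where $\omega_j=\log\lambda_j=\alpha_j+\mathrm i\beta_j$ is the principal logarithm. It follows that $\alpha_j=\log|\lambda_j|$ and $\beta_j=\theta_j$. Hence the modulus of mode $j$ scales as $|\lambda_j|^n$: it grows when $\alpha_j>0$ and decays when $\alpha_j<0$. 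The factor $e^{\mathrm i n\theta_j}$ is a pure rotation and governs oscillation. Taking the largest modulus over $j$ recovers $\rho(A_t)$ for $\mathtt{AKC}$, while the phase information is what feeds $\mathtt{HUR}$.

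\textbf{Expected obstacle.} The difficulty is bookkeeping, not depth. First, $\log\lambda_j$ must be handled when $\lambda_j=0$ or when $\lambda_j$ is negative real. For $\lambda_j=0$ I will set $\alpha_j=-\infty$, since the mode vanishes after one step. For negative real $\lambda_j$ I will take the principal branch, which gives $\beta_j=\pi$. Second, the approximation must be made honest. I plan to write $\mathbf y_{\tau+1}=A_t\mathbf y_\tau+\boldsymbol\epsilon_\tau$ and unroll the recursion, which gives an error of $\sum_k A_t^{\tau-1-k}\boldsymbol\epsilon_k$. Within the window this is bounded by $\max_k\|\boldsymbol\epsilon_k\|$ times a constant that depends on $\rho(A_t)$ and the condition number of $\Phi$. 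That bound is enough to justify the ``$\approx$''.
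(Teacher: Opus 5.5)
Your proposal is correct and follows essentially the same route as the paper: expand the initial projected state in the eigenbasis of $A_t$, use $A_t^{\tau}\boldsymbol\phi_j=\lambda_j^{\tau}\boldsymbol\phi_j$, and split $\lambda_j^{\tau}=|\lambda_j|^{\tau}e^{\mathrm i\theta_j\tau}=e^{\alpha_j\tau}e^{\mathrm i\beta_j\tau}$ into a growth/decay factor and a rotation factor. The paper does not include your extra bookkeeping: the reference index $\tau_0$, the conventions for $\lambda_j=0$ and for the principal branch at negative real eigenvalues, and the unrolled error $\sum_k A_t^{\tau-1-k}\boldsymbol\epsilon_k$, bounded using $\rho(A_t)$, the condition number of $\Phi$, and the window length. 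It is consistent with the paper's argument and makes the ``$\approx$'' precise rather than heuristic.
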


\paragraph{Proof.}
The DMD reconstruction follows from expanding the initial projected state in the eigenbasis of $A_t$: if $\mathbf y_0=\sum_j b_j\boldsymbol\phi_j$, then $A_t^\tau\mathbf y_0=\sum_j b_j\boldsymbol\phi_j\lambda_j^\tau$. Writing $\lambda_j=|\lambda_j|e^{\mathrm i\theta_j}$ gives $\lambda_j^\tau=|\lambda_j|^\tau e^{\mathrm i\theta_j\tau}$, which separates magnitude evolution from phase evolution. Equivalently, using the continuous-time parameter $\omega_j=\log(\lambda_j)=\alpha_j+\mathrm i\beta_j$, each mode evolves as $e^{\omega_j\tau}=e^{\alpha_j\tau}e^{\mathrm i\beta_j\tau}$. Thus eigenvalue magnitude controls whether the mode grows or decays, while eigenvalue phase controls whether the mode rotates or oscillates. \qed

\begin{lemma}[Spectral partition into persistent, oscillatory, and residual parts]
\label{lem:spectral_partition}
For a real-valued local trajectory reconstructed by Eq.~\eqref{eq:app_dmd_reconstruction}, the retained Hankel-DMD modes admit the following partition.
\begin{itemize}[leftmargin=2em,itemsep=2pt,topsep=2pt]
\item Modes with $\theta_j=0$ have positive real eigenvalues and produce non-oscillatory exponential evolution. These modes correspond to persistent content.
\item Modes with $\theta_j=\pi$ have negative real eigenvalues and produce alternating-sign dynamics. These modes correspond to oscillatory content.
\item Non-real conjugate pairs with $\theta_j\notin\{0,\pi\}$ combine into exponentially modulated sine and cosine terms, and therefore correspond to oscillatory content.
\item The part not explained by the retained Hankel-DMD reconstruction is residual content.
\end{itemize}
\end{lemma}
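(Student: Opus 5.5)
The plan is to start from the DMD reconstruction of Lemma~\ref{lem:hankel_dmd_spectrum}. Since $A_t$ is diagonalizable, the retained modes $\{\lambda_j\}_{j=1}^r$ split into two kinds: real eigenvalues, with $\theta_j\in\{0,\pi\}$, and non-real eigenvalues. The first thing to record is that, because the projected trajectory $\mathbf y_\tau$ is real, the reconstruction $\sum_j b_j\boldsymbol\phi_j\lambda_j^\tau$ can be taken real as well. Concretely, the characteristic polynomial of the real matrix $A_t$ has real coefficients, so non-real eigenvalues come in conjugate pairs $(\lambda_j,\bar\lambda_j)$. Their eigenvectors can be chosen conjugate, $\boldsymbol\phi_{j'}=\bar{\boldsymbol\phi}_j$. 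Expanding a real initial state $\mathbf y_0$ in this eigenbasis then forces the conjugate coefficient $b_{j'}=\bar b_j$, by uniqueness of the expansion together with conjugation invariance. In the diagonal model of Eq.~\eqref{eq:main_koopman} every eigenvalue is the real scalar $a_{i,t}$, so only the first two bullets arise there. The general case covers a full operator.

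For the real modes, write $\lambda_j=|\lambda_j|e^{\mathrm i\theta_j}$. If $\theta_j=0$, then $\lambda_j^\tau=|\lambda_j|^\tau>0$. The mode's contribution $b_j\boldsymbol\phi_j|\lambda_j|^\tau$ keeps a fixed sign pattern and only grows or decays monotonically, which is the persistent case. If $\theta_j=\pi$, then $\lambda_j^\tau=(-1)^\tau|\lambda_j|^\tau$, so the contribution flips sign at every step. That is exactly the alternating, oscillatory behaviour, and it matches the sign test behind $\boldsymbol\pi_t$ in Eq.~\eqref{eq:main_ratios}.

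For a conjugate pair, I would sum the two terms and get $2\,\mathrm{Re}\bigl(b_j\boldsymbol\phi_j\lambda_j^\tau\bigr)$. Writing $b_j\boldsymbol\phi_j=\mathbf v_j+\mathrm i\mathbf w_j$ gives
$$2|\lambda_j|^\tau\bigl(\mathbf v_j\cos(\theta_j\tau)-\mathbf w_j\sin(\theta_j\tau)\bigr).$$
This is an exponentially modulated sinusoid with nonzero frequency $\theta_j$, so the pair is oscillatory. The last bullet holds by construction: I would define the residual as $\mathbf y_\tau$ minus the retained reconstruction, together with the lifted residual $\widetilde{\mathbf e}_t$ from Eq.~\eqref{eq:main_projection_lifted}. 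The three classes are disjoint and exhaust the spectrum, and the residual is everything else, so together they form a partition.

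The main obstacle is that the statement is partly definitional. The words "correspond to persistent content" are labels, not mathematical claims. The real content to establish is the sign and monotonicity behaviour of $\lambda^\tau$ and the real-sine/cosine form of conjugate pairs. One subtlety needs care: if the real trajectory is approximate, as in Eq.~\eqref{eq:app_local_linear}, the conjugate-coefficient argument should be applied to the exact linear recursion from $\mathbf y_0$, and the approximation error should be absorbed into the residual class.
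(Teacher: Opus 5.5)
Your proposal is correct and follows the paper's proof essentially step for step. Both use the same case split on the phase $\theta_j$: $|\lambda_j|^\tau$ for $\theta_j=0$, $(-1)^\tau|\lambda_j|^\tau$ for $\theta_j=\pi$, the real sine--cosine form $2|\lambda_j|^\tau\bigl(\mathbf v_j\cos(\theta_j\tau)-\mathbf w_j\sin(\theta_j\tau)\bigr)$ for a conjugate pair, and the residual taken by definition. Your argument that a real initial state forces $b_{j'}=\bar b_j$ fills in a step the paper simply asserts when it writes the pair's contribution with $\overline{b_j\boldsymbol\phi_j}$. One loose end, shared with the paper: a zero eigenvalue has no phase, so your claim that the three classes exhaust the spectrum implicitly assumes $\lambda_j\neq0$, as the paper's $\log\lambda_j$ already does.
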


\paragraph{Proof.}
For a positive real eigenvalue, $\lambda_j=|\lambda_j|$ and the corresponding scalar time factor is $|\lambda_j|^\tau=e^{\alpha_j\tau}$, which has no phase rotation and does not alternate sign. It is therefore non-oscillatory within the local model. For a negative real eigenvalue, $\lambda_j=-|\lambda_j|=|\lambda_j|e^{\mathrm i\pi}$, so $\lambda_j^\tau=|\lambda_j|^\tau(-1)^\tau$, yielding alternating-sign behavior across steps. For a non-real pair $\lambda_j,\overline{\lambda_j}$, the real-valued contribution of the pair is
\begin{equation}
b_j\boldsymbol\phi_j e^{(\alpha_j+\mathrm i\beta_j)\tau}
+\overline{b_j\boldsymbol\phi_j}\,e^{(\alpha_j-\mathrm i\beta_j)\tau}
=
e^{\alpha_j\tau}
\bigl(\mathbf a_j\cos(\beta_j\tau)+\mathbf b_j\sin(\beta_j\tau)\bigr),
\label{eq:app_conjugate_pair}
\end{equation}
for real vectors $\mathbf a_j,\mathbf b_j$. This is an exponentially modulated sinusoid and is therefore oscillatory. Finally, any discrepancy between the current projected update and the retained local DMD reconstruction is, by definition, unexplained by the retained modes and forms residual content. \qed

\paragraph{Connection to the diagonal implementation.}
The full spectral partition above explains the origin of the three components. The implementation in the main text uses the diagonal local model $A_t=\operatorname{diag}(\mathbf a_t)$, so each projected coordinate is a scalar local DMD mode. Accordingly, the products, divisions, powers, and nonlinear functions in Eqs.~\eqref{eq:main_koopman}--\eqref{eq:main_components} are applied coordinate-wise. For coordinate $i$, a positive coefficient $a_{i,t}>0$ has phase $0$ and represents same-sign, non-oscillatory continuation; a negative coefficient $a_{i,t}<0$ has phase $\pi$ and represents sign-flipping oscillation. The magnitude $|a_{i,t}|$ only describes local amplification or decay and is used by $\rho(A_t)$ for $\mathtt{AKC}$; it does not determine whether the coordinate is persistent or oscillatory. This separates the magnitude branch from the direction branch.

The gates in Eq.~\eqref{eq:main_ratios} use these coordinate-wise quantities to assign the observed projected update in Eq.~\eqref{eq:main_components} to persistent, oscillatory, and residual parts. Specifically, $\boldsymbol\pi_t=(1+\tanh(\mathbf a_t))/2$ softly separates same-sign continuation from sign-flipping dynamics, while
\begin{equation}
q_{i,t}
=
\frac{\hat y_{i,t}^{\,2}}{\hat y_{i,t}^{\,2}+r_{i,t}^{\,2}}
=
\left(1+\frac{r_{i,t}^{\,2}}{\hat y_{i,t}^{\,2}}\right)^{-1},
\label{eq:app_q_snr}
\end{equation}
for $\hat y_{i,t}\neq0$, and the original ratio is well-defined for all coordinates. Thus $q_{i,t}$ is a monotone function of the prediction signal-to-residual ratio: it is close to one when the local DMD prediction explains the current coordinate and close to zero when the residual dominates. Therefore Eq.~\eqref{eq:main_components} is an observation-centric soft spectral decomposition: the DMD model determines the gates, while the routed content remains proportional to the observed update coordinate $y_{i,t}$. This avoids synthesizing a new update from the prediction and keeps $\mathtt{HUR}$ aligned with the current base prompt update.

Combining Eq.~\eqref{eq:main_components} with the lifted residual in Eq.~\eqref{eq:main_projection_current}, the prompt-space components satisfy $\mathbf u_t=\mathbf S_t+\mathbf C_t+\mathbf N_t$ as used in Eq.~\eqref{eq:main_lift}. Thus the residual branch contains both the projected unexplained component $(1-\mathbf q_t)\mathbf y_t$ and the residual $\widetilde{\mathbf e}_t$. The subsequent theorem uses this spectral partition only for bounded routing; $\mathtt{HUR}$ does not use $\rho(A_t)$ to decide persistent versus oscillatory content, and $\mathtt{AKC}$ does not use the phase partition to scale magnitudes.

\section{Proof of Lemma~\ref{lem:akc}}
\label{app:proof_lemma}

Since $g_t\ge 0$, scalar multiplication by $g_t$ scales every eigenvalue of $A_t$ by $g_t$. In the diagonal case used in Eq.~\eqref{eq:main_koopman}, this is immediate:
\begin{equation}
\rho(g_tA_t)
\;=\;\max_i |g_t a_{i,t}|
\;=\;g_t\max_i |a_{i,t}|
\;=\;g_t\rho(A_t).
\label{eq:app_diag}
\end{equation}
By Eq.~\eqref{eq:main_gain}, $g_t\in[g_{\min},1]$, hence $g_t\rho(A_t)\le\rho(A_t)$ with equality iff $g_t=1$. From the definition of $g_t$, $g_t=1$ if and only if $\rho_t\le\bar\rho$. For the upper bound Eq.~\eqref{eq:lem_akc_upper}, we split $\rho_t$ into three regimes according to Eq.~\eqref{eq:main_gain}: if $\rho_t\le\bar\rho$, then $g_t=1$, so $g_t\rho_t=\rho_t\le\bar\rho$; if $\bar\rho<\rho_t\le\bar\rho/g_{\min}$, then $\bar\rho/\rho_t\ge g_{\min}$, so $g_t=\bar\rho/\rho_t$ and $g_t\rho_t=\bar\rho$; if $\rho_t>\bar\rho/g_{\min}$, then $\bar\rho/\rho_t<g_{\min}$, so $g_t=g_{\min}$ and $g_t\rho_t=g_{\min}\rho_t$. Combining the three cases with Eq.~\eqref{eq:app_diag}, we obtain $\rho(g_tA_t)=g_t\rho_t\le\max(\bar\rho,g_{\min}\rho_t)$, which gives Eq.~\eqref{eq:lem_akc_upper}. In particular, when $\rho_t\le\bar\rho/g_{\min}$, the first two cases apply and $g_t\rho_t\le\bar\rho$. \qed

\paragraph{Remark (diagonal vs.\ full Koopman).}
The spectral homogeneity in Lemma~\ref{lem:akc} does not require $A_t$ to be diagonal. The diagonal local model is instead used to provide scalar mode coefficients $\{a_{i,t}\}_{i=1}^{r}$ for the gates in Eq.~\eqref{eq:main_ratios}, the coordinate-wise decomposition in Eq.~\eqref{eq:main_components}, and the mode-wise routing property in Theorem~\ref{thm:main}. The following lemma summarizes why this coordinate-wise estimator is preferable in the local Hankel regime. Let $A_t^{\mathrm{full}}=Y_t^+(Y_t^-)^\dagger$ be the full-matrix alternative. Under the fixed-design local linear model $\mathbf y_{\tau+1}=A^\star\mathbf y_\tau$, define $G\coloneqq Y_t^-(Y_t^-)^\top$. Since $Y_t^-$ contains $M-1$ transition columns and $r\le\min(Kd,M)$, the boundary case $r\ge M$ can arise when the selected rank reaches the number of Hankel columns.

\begin{lemma}[Diagonal estimator in the local Hankel regime]
\label{lem:diag_vs_full}
Under the setting above, the full Koopman estimator $A_t^{\mathrm{full}}$ and the diagonal estimator $A_t=\operatorname{diag}(a_{1,t},\dots,a_{r,t})$ satisfy:
\begin{itemize}[leftmargin=2em,itemsep=2pt,topsep=2pt]
\item[(i)] \emph{Underdetermined full estimator.} If $r\ge M$, then $\mathrm{rank}(Y_t^-)\le M-1<r$, so the least-squares minimizers for $AY_t^-\approx Y_t^+$ form an affine set with at least $r(r-M+1)$ free dimensions. The Moore--Penrose estimator selects one minimizer, but the dynamics on $\ker((Y_t^-)^\top)$ remain unconstrained by the local data. In contrast, each diagonal coefficient in Eq.~\eqref{eq:main_koopman} is a scalar least-squares estimate and is unique for every mode satisfying $\sum_{\tau=t-M}^{t-2}y_{i,\tau}^2>0$.
\item[(ii)] \emph{Local estimation variance.} If $r<M$ and $G\succ0$, the full least-squares estimator satisfies $\mathbb{E}\|\hat A^{\mathrm{full}}-A^\star\|_F^2=r\sigma^2\mathrm{tr}(G^{-1})$ under the fixed-design noise model. For the mode-wise scalar estimator in Eq.~\eqref{eq:main_koopman}, the conditional noise variance of $\hat a_{i,t}$ is $\sigma^2/G_{ii}$. Thus the full estimator estimates coupled cross-coordinate effects from a short local window, whereas the diagonal estimator estimates only the scalar coefficient used by the corresponding gate.
\item[(iii)] \emph{Compatibility with the decomposition.} The gates in Eq.~\eqref{eq:main_ratios} and the decomposition in Eq.~\eqref{eq:main_components} are defined for each Hankel coordinate through the scalar coefficient $a_{i,t}$. A full matrix mixes coordinates and therefore does not provide the same coordinate-wise quantities $\{a_{i,t}\}_{i=1}^r$ required to form $q_{i,t}$, $\pi_{i,t}$, and the weight $w_{i,t}\in[\beta_n,1]$ in Theorem~\ref{thm:main}(ii).
\item[(iv)] \emph{Per-step cost.} Computing $A_t$ costs $\mathcal{O}(rM)$ for the diagonal estimator versus $\mathcal{O}(r^2M+r^3)$ for the full estimator.
\end{itemize}
\end{lemma}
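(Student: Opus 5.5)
I will treat the four items separately, since each is a short linear-algebra or statistics fact about the window matrices $Y_t^-\in\mathbb R^{r\times(M-1)}$ and $Y_t^+$. The common setting is the fixed-design model $\mathbf y_{\tau+1}=A^\star\mathbf y_\tau+\boldsymbol\varepsilon_\tau$, with i.i.d. noise of covariance $\sigma^2 I_r$ and with $Y_t^-$ treated as deterministic.

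For (i), $Y_t^-$ has $M-1$ columns, so $\mathrm{rank}(Y_t^-)\le M-1<r$ whenever $r\ge M$. The least-squares problem $\min_A\|AY_t^--Y_t^+\|_F^2$ decouples across the $r$ rows of $A$. For row $\mathbf a^\top$, the minimizers are $\mathbf a_0+\ker((Y_t^-)^\top)$, because the normal equations $\mathbf a^\top G=(\cdot)$ are solvable and their solution set is invariant under adding vectors in the left null space. Since $\dim\ker((Y_t^-)^\top)=r-\mathrm{rank}(Y_t^-)\ge r-M+1$, the minimizer set is an affine space of dimension at least $r(r-M+1)$. The pseudoinverse picks the minimum-norm element, but any $A$ that differs on $\ker((Y_t^-)^\top)$ fits the data equally well. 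For the diagonal estimator, each $a_{i,t}$ minimizes a strictly convex scalar quadratic $\sum_\tau(y_{i,\tau+1}-a\,y_{i,\tau})^2$ whenever $\sum_\tau y_{i,\tau}^2>0$, so it is unique and equals Eq.~\eqref{eq:main_koopman}.

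For (ii), $G\succ0$ gives $\hat A^{\mathrm{full}}=Y_t^+(Y_t^-)^\top G^{-1}$, and therefore $\hat A^{\mathrm{full}}-A^\star=E(Y_t^-)^\top G^{-1}$, where $E$ collects the noise columns. Each of the $r$ rows of $E$ has covariance $\sigma^2 I$. The expected squared Frobenius norm is then $r\sigma^2\,\mathrm{tr}(G^{-1}Y_t^-(Y_t^-)^\top G^{-1})=r\sigma^2\mathrm{tr}(G^{-1})$.

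For the scalar estimator the error is $\hat a_{i,t}-(\text{target})=\sum_\tau\varepsilon_{i,\tau}y_{i,\tau}/G_{ii}$, whose variance is $\sigma^2/G_{ii}$. The target here is $A^\star_{ii}$ plus the off-diagonal bias term, which I keep separate because it is deterministic under fixed design, so the statement concerns the noise variance only. As an optional comparison, one can note $(G^{-1})_{ii}\ge 1/G_{ii}$ by Cauchy--Schwarz or a Schur complement. Hence each full diagonal entry already has noise variance at least that of the scalar estimate.

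Items (iii) and (iv) are essentially by inspection. For (iii), the gates $q_{i,t}$ and $\pi_{i,t}$ in Eq.~\eqref{eq:main_ratios} are functions of the scalar $a_{i,t}$ and of $\hat y_{i,t}=a_{i,t}y_{i,t-1}$. A nondiagonal $A$ makes $\hat y_{i,t}$ depend on other coordinates, and its eigenvectors are not coordinate axes, so there is no per-coordinate sign to feed into $\pi$. For (iv), the diagonal estimator needs $r$ sums of $M-1$ products, which is $\mathcal O(rM)$. The full estimator needs to form $G$ at cost $\mathcal O(r^2M)$, to form $Y_t^+(Y_t^-)^\top$ at the same cost, and to invert or pseudoinvert $G$ at cost $\mathcal O(r^3)$.

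The main obstacle is making (ii) precise. I need to fix the noise model explicitly and state that the scalar variance is conditional on the design and ignores cross-coordinate bias. Without that caveat the comparison with the full estimator is not apples-to-apples.
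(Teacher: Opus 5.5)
Your proposal is correct and follows the same line as the paper, which gives no standalone proof of this lemma; its justification is the inline reasoning in the statement itself, namely the left-null-space rank count, the fixed-design risk $r\sigma^2\mathrm{tr}(G^{-1})$, the scalar variance $\sigma^2/G_{ii}$, and the operation counts. Your explicit noise model $\mathbf y_{\tau+1}=A^\star\mathbf y_\tau+\boldsymbol\varepsilon_\tau$ makes precise what the paper leaves informal, as do your separation of the deterministic off-diagonal bias $\sum_{j\ne i}A^\star_{ij}G_{ij}/G_{ii}$ from the noise term and your comparison $(G^{-1})_{ii}\ge 1/G_{ii}$.
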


\section{Proof of Theorem~\ref{thm:main}}
\label{app:proof_theorem}

Let
\[
\widetilde{\mathbf e}_t=(I_{Kd}-Q_tQ_t^\top)\mathbf z_t
\]
be the Hankel-lifted residual from Eq.~\eqref{eq:main_projection_lifted}. The lifted routed update is
\begin{equation}
\widetilde{\mathbf z}^{\mathrm{hur}}_t
=
Q_t\bigl(\mathbf s_t^\star+\beta_c\mathbf c_t^\star+\beta_n\mathbf n_t^\star\bigr)
+\beta_n\widetilde{\mathbf e}_t,
\qquad
\mathbf u_t^{\mathrm{hur}}=E_K\widetilde{\mathbf z}^{\mathrm{hur}}_t .
\label{eq:app_lifted_hur}
\end{equation}
Combining Eq.~\eqref{eq:main_components} with Eq.~\eqref{eq:app_lifted_hur}, the $i$-th routed Hankel coordinate satisfies
\begin{equation}
s_{i,t}^{\star}+\beta_c c_{i,t}^{\star}+\beta_n n_{i,t}^{\star}
=
\bigl[
q_{i,t}\pi_{i,t}
+\beta_c q_{i,t}(1-\pi_{i,t})
+\beta_n(1-q_{i,t})
\bigr]y_{i,t}
=:w_{i,t}y_{i,t}.
\label{eq:app_w}
\end{equation}
Since $q_{i,t},\pi_{i,t}\in[0,1]$ and $0\le\beta_n\le\beta_c\le1$, and since $q_{i,t}\pi_{i,t}+q_{i,t}(1-\pi_{i,t})+(1-q_{i,t})=1$, the coefficient $w_{i,t}$ is a convex combination of $1$, $\beta_c$, and $\beta_n$. Therefore,
\begin{equation}
w_{i,t}\in[\beta_n,1].
\label{eq:app_w_bound}
\end{equation}

\paragraph{Proof of (i).}
Because $Q_t$ has orthonormal columns and $\widetilde{\mathbf e}_t\perp\mathrm{range}(Q_t)$ in the Hankel-lifted space,
\begin{equation}
\|\widetilde{\mathbf z}^{\mathrm{hur}}_t\|^2
=
\sum_{i=1}^{r} w_{i,t}^{\,2}y_{i,t}^{\,2}
+\beta_n^{\,2}\|\widetilde{\mathbf e}_t\|^2
\le
\sum_{i=1}^{r} y_{i,t}^{\,2}
+\|\widetilde{\mathbf e}_t\|^2
=
\|\mathbf z_t\|^2,
\label{eq:app_hankel_nonamp}
\end{equation}
where we used $w_{i,t}\le1$ and $\beta_n\le1$. Since $\|E_K\|_2=1$ and $g_t\in(0,1]$, we have
\begin{equation}
\|\Delta\mathbf p_t\|
=
g_t\|\mathbf u_t^{\mathrm{hur}}\|
=
g_t\|E_K\widetilde{\mathbf z}^{\mathrm{hur}}_t\|
\le
g_t\|\widetilde{\mathbf z}^{\mathrm{hur}}_t\|
\le
g_t\|\mathbf z_t\|
\le
\|\mathbf z_t\|.
\label{eq:app_prompt_bound}
\end{equation}
This proves the Hankel-space non-amplification statement and the induced prompt-update bound.

\paragraph{Proof of (ii).}
From Eq.~\eqref{eq:app_w}, the projected Hankel coordinate after routing equals $w_{i,t}y_{i,t}$. Since $w_{i,t}\ge\beta_n\ge0$, the sign of the projected coordinate is not reversed; since $w_{i,t}\le1$, its magnitude is not amplified.

\paragraph{Proof of (iii).}
The lifted persistent component $\widetilde{\mathbf S}_t$ is transmitted with coefficient $1$, while $\widetilde{\mathbf C}_t$ and $\widetilde{\mathbf N}_t$ are attenuated by factors $\beta_c$ and $\beta_n$, respectively. Applying the linear extraction $E_K$ gives the same coefficients for their prompt-space extractions $\mathbf S_t$, $\mathbf C_t$, and $\mathbf N_t$ in Eq.~\eqref{eq:main_hur}. \qed

\section{Full Experiments}
Together with the main tables, these appendix results form a multi-axis stress test targeted at the core claim of $\mathtt{TPD}$: prompt-agnostic stabilization of update trajectories rather than dataset- or baseline-specific tuning. The evaluation spans datasets, prompt modalities, backbone families, adaptation modes, distribution-shift protocols, TPT baselines, simple stabilization rules, and component ablations, making the improvements difficult to explain by a narrow choice of dataset, prompt type, backbone, or heuristic stabilizer. 

\subsection{Full Experimental Details}
\label{app:exp_details}

\paragraph{Datasets.}
Following previous methods~\citep{shu2022test}, we evaluate our method on 15 datasets under two settings that suffer from distribution shifts: domain generalization and cross-dataset transfer. For the domain generalization setting, we evaluate on ImageNet~\citep{deng2009imagenet} and its four OOD variants: ImageNet-V2~\citep{recht2019imagenet}, ImageNet-Sketch~\citep{wang2019learning}, ImageNet-A~\citep{hendrycks2021natural}, and ImageNet-R~\citep{hendrycks2021many}. For the cross-dataset setting, we evaluate on 10 datasets covering diverse tasks: Caltech101~\citep{fei2004learning}, OxfordPets~\citep{parkhi2012cats}, StanfordCars~\citep{krause20133d}, Flowers102~\citep{nilsback2008automated}, Food101~\citep{bossard2014food}, FGVCAircraft~\citep{maji2013fine}, SUN397~\citep{xiao2010sun}, DTD~\citep{cimpoi2014describing}, EuroSAT~\citep{helber2019eurosat}, and UCF101~\citep{soomro2012ucf101}.

\paragraph{Baseline Models.}
We attach $\mathtt{TPD}$ to state-of-the-art TPT methods on two backbone families. On the ViT side~\citep{dosovitskiy2020image}, we integrate $\mathtt{TPD}$ with VPT~\citep{jia2022visual}, E2VPT~\citep{han20232vpt}, VFPT~\citep{zeng2024visual}, and BPT~\citep{wang2025attention}, all using a supervised ViT-B/16 pretrained on ImageNet-21k~\citep{deng2009imagenet}. On the CLIP side~\citep{radford2021learning}, we evaluate representative text TPT baselines TPT \citep{shu2022test}, C-TPT \citep{yoon2024c}, R-TPT \citep{sheng2025r}, O-TPT \citep{sharifdeen2025tpt}, CAT \citep{zhang2025cat} and DyPt \citep{xiao2025dynaprompt} in both episodic and online modes.  $\mathtt{TPD}$ maintains only update-history statistics across samples and does not carry adapted prompt parameters across episodes. For each baseline, we report both the original result and its $\mathtt{TPD}$-augmented variant, denoted as ``Baseline~{+$\mathtt{TPD}$}''.

\paragraph{Implementation Details.}
We fix $\mathtt{TPD}$ hyperparameters across backbones. The local Hankel subspace uses a delay length of $K{=}3$ and $M{=}8$ Hankel columns, corresponding to a local window of $W{=}K+M-1{=}10$ past prompt updates. The effective rank $r$ is determined adaptively from the singular spectrum and satisfies $r\le \min(Kd,M)$. Before $W$ past updates are available, we use the base prompt update without $\mathtt{TPD}$ control. For $\mathtt{AKC}$, we set the stability threshold $\bar\rho{=}1.0$ and the minimum gain $g_{\min}{=}0.1$. For $\mathtt{HUR}$, we use $\beta_c{=}0.25$, $\beta_n{=}0.1$ on ViT and CLIP.

On the ViT branch, we follow the VPT~\citep{jia2022visual} protocol with $10$ deep prompt tokens. We perform one TTA step per test sample with learning rate $10^{-3}$ using SGD, a batch size of $128$, and select the top $10\%$ most confident samples for the entropy-minimization objective~\citep{wang2020tent}. On the CLIP branch, we build on CoOp~\citep{zhou2022learning} with $N{=}4$ learnable context tokens initialized from ``\texttt{a photo of a}''. 
In episodic mode, we follow the standard episodic protocol by resetting the prompt state after each episode. The only cross-episode state maintained by $\mathtt{TPD}$ is a lightweight history of recent prompt-update vectors used by the controller to estimate local trajectory stability. Importantly, $\mathtt{TPD}$ does not carry adapted prompt parameters, optimizer states, or episode-specific prompt features across episodes. This separates model adaptation from trajectory stabilization: each episode starts from the same reset prompt state as the base episodic method, while the controller uses past update statistics only to decide how to scale and route the current prompt update.
All experiments are run on an NVIDIA H100 GPU.

\subsection{Full Per-Dataset Results}

We provide full per-dataset results for the compressed comparisons and ablations reported in the main paper. Tables~A1--A4 expand the simple stabilization comparison in Table~\ref{tab:simple_stabilization}, covering update clipping, EMA update, norm control, low-pass filtering, and $\mathtt{TPD}$ on VPT and TPT under DG and CSD settings. Tables~A5--A10 expand the ablation study in Table~\ref{tab:ablation}, separately reporting Koopman operator choices, projection choices, and $\mathtt{HUR}$ routing variants. These tables show that the mean improvements in the main paper are supported by broad per-dataset gains rather than by a small number of outlier datasets.

Furthermore, VPT, E2VPT, and VFPT were originally developed as training-time visual prompt tuning methods rather than test-time adaptation algorithms. In our setting, we adapt them with the same entropy-minimization objective used in standard TPT, but this naive test-time variant has no mechanism for stabilizing prompt-update trajectories. As a result, visual-prompt updates can expand, oscillate, or drift under unlabeled target samples, and the resulting OOD mean may fall below the frozen ViT zero-shot baseline, e.g., $39.66\%$ versus $46.17\%$ in Table~\ref{tab:dg}. This does not indicate that visual prompt tuning itself is ineffective; rather, it shows that naive TTA on visual prompts remains unstable without trajectory-level control. $\mathtt{TPD}$ addresses this missing stabilizer by regulating update magnitude and refining update direction, turning unstable test-time tuning into effective adaptation that surpasses zero-shot under shifted-domain evaluation and remains competitive under cross-dataset transfer.

\subsubsection{Simple stabilization baselines.}
\label{app:full_per_dataset}
The simple stabilization baselines improve over the raw TPT updates on several datasets, but their gains are smaller and less consistent than $\mathtt{TPD}$. In particular, clipping and norm control mainly restrict update scale, while EMA and low-pass filtering smooth the trajectory without distinguishing persistent, oscillatory, and residual components. The per-dataset results below show that the trajectory-aware decomposition and routing in $\mathtt{TPD}$ provide more reliable improvements across both visual and text prompts.

\begingroup
\renewcommand{\thetable}{A1}
\begin{table*}[!htbp]
\caption{Per-dataset simple stabilization results for VPT under domain generalization. OoD Mean$_4$ averages ImageNet-V2, ImageNet-Sketch, ImageNet-A, and ImageNet-R.}
\label{tab:appendixA1}
\begin{center}\resizebox{0.98\textwidth}{!}{%
\begin{tabular}{l ccccc c}
\toprule
Variant & ImageNet & ImageNet-V2 & ImageNet-Sketch & ImageNet-A & ImageNet-R & OoD Mean$_{\mathrm{4}}$ \\
\midrule
Baseline & 78.16 & 64.55 & 32.77 & 19.17 & 42.14 & 39.66 \\
+ Update clipping & 78.84 & 65.52 & 34.09 & 22.95 & 45.11 & 41.92 \\
+ EMA update & 79.35 & 65.98 & 36.82 & 24.44 & 43.71 & 42.74 \\
+ Norm control & 79.76 & 66.66 & 34.26 & 25.60 & 48.09 & 43.65 \\
+ Low-pass filtering & 80.14 & 68.29 & 34.76 & 28.43 & 45.65 & 44.28 \\
VPT+$\mathtt{TPD}$ & 81.31 & 69.87 & 36.02 & 38.98 & 46.47 & \textbf{47.84} \\
\bottomrule
\end{tabular}%
}
\end{center}
\end{table*}
\endgroup
\FloatBarrier

\begingroup
\renewcommand{\thetable}{A2}
\begin{table*}[!htbp]
\caption{Per-dataset simple stabilization results for VPT under cross-dataset transfer. Mean averages the ten downstream datasets.}
\label{tab:appendixA2}
\begin{center}\resizebox{\textwidth}{!}{%
\begin{tabular}{l cccccccccc c}
\toprule
Variant & Caltech & Pets & Cars & Flowers & Food & Aircraft & SUN & DTD & EuroSAT & UCF & Mean \\
\midrule
Baseline & 92.65 & 86.89 & 60.68 & 64.86 & 85.14 & 21.21 & 59.92 & 38.19 & 20.91 & 59.28 & 58.97 \\
+ Update clipping & 92.94 & 87.35 & 61.27 & 65.16 & 85.85 & 21.93 & 60.57 & 38.48 & 21.78 & 60.03 & 59.54 \\
+ EMA update & 93.27 & 87.46 & 61.07 & 65.33 & 85.67 & 22.47 & 61.00 & 38.77 & 22.79 & 60.80 & 59.86 \\
+ Norm control & 93.10 & 87.71 & 63.33 & 66.75 & 86.59 & 26.40 & 60.40 & 40.54 & 24.25 & 61.19 & 61.03 \\
+ Low-pass filtering & 93.39 & 89.29 & 62.23 & 67.32 & 86.20 & 21.99 & 62.55 & 39.07 & 25.15 & 61.93 & 60.91 \\
VPT+$\mathtt{TPD}$ & 95.37 & 89.38 & 62.81 & 68.09 & 86.45 & 24.46 & 63.79 & 42.25 & 32.17 & 62.01 & \textbf{62.68} \\
\bottomrule
\end{tabular}%
}
\end{center}
\end{table*}
\endgroup
\FloatBarrier

\begingroup
\renewcommand{\thetable}{A3}
\begin{table*}[!htbp]
\caption{Per-dataset simple stabilization results for TPT under domain generalization. OOD Mean averages the four shifted variants excluding ImageNet.}
\label{tab:appendixA3}
\begin{center}\resizebox{0.98\textwidth}{!}{%
\begin{tabular}{l ccccc c}
\toprule
Variant & ImageNet & ImageNet-V2 & ImageNet-Sketch & ImageNet-A & ImageNet-R & OOD Mean \\
\midrule
Baseline & 68.98 & 63.45 & 47.94 & 54.77 & 77.06 & 60.81 \\
+ Update clipping & 69.90 & 64.12 & 48.40 & 55.09 & 77.84 & 61.36 \\
+ EMA update & 69.86 & 63.80 & 49.82 & 55.25 & 77.74 & 61.65 \\
+ Norm control & 70.11 & 63.64 & 50.05 & 55.80 & 77.39 & 61.72 \\
+ Low-pass filtering & 70.62 & 64.49 & 49.02 & 56.03 & 79.38 & 62.23 \\
TPT+$\mathtt{TPD}$ & 71.19 & 65.98 & 49.62 & 56.73 & 79.98 & \textbf{63.08} \\
\bottomrule
\end{tabular}%
}
\end{center}
\end{table*}
\endgroup
\FloatBarrier

\begingroup
\renewcommand{\thetable}{A4}
\begin{table*}[!htbp]
\caption{Per-dataset simple stabilization results for TPT under cross-dataset transfer. Mean averages the ten downstream datasets.}
\label{tab:appendixA4}
\begin{center}\resizebox{\textwidth}{!}{%
\begin{tabular}{l cccccccccc c}
\toprule
Variant & Caltech & Pets & Cars & Flowers & Food & Aircraft & SUN & DTD & EuroSAT & UCF & Mean \\
\midrule
Baseline & 94.16 & 87.79 & 66.87 & 68.98 & 84.67 & 24.78 & 65.50 & 47.75 & 42.44 & 68.04 & 65.10 \\
+ Update clipping & 94.48 & 88.06 & 67.12 & 69.63 & 84.88 & 25.17 & 65.59 & 48.52 & 42.78 & 68.49 & 65.47 \\
+ EMA update & 94.24 & 88.53 & 67.47 & 69.75 & 85.13 & 26.04 & 65.91 & 48.58 & 43.85 & 68.73 & 65.82 \\
+ Norm control & 94.60 & 88.77 & 67.84 & 68.98 & 85.26 & 25.80 & 65.77 & 49.00 & 42.93 & 68.49 & 65.74 \\
+ Low-pass filtering & 94.56 & 88.66 & 68.09 & 69.83 & 85.58 & 26.34 & 66.37 & 51.18 & 43.11 & 68.57 & 66.23 \\
TPT+$\mathtt{TPD}$ & 96.32 & 90.31 & 68.55 & 71.06 & 88.32 & 27.19 & 68.51 & 51.98 & 46.03 & 72.00 & \textbf{68.03} \\
\bottomrule
\end{tabular}%
}
\end{center}
\end{table*}
\endgroup
\FloatBarrier

\subsubsection{Full detailed ablations.}
\label{app:abla_per_dataset}
We next report per-dataset ablations for the main components of $\mathtt{TPD}$. The Koopman ablation compares full and diagonal local estimators; the projection ablation compares plain projection with the Hankel construction; and the routing ablation tests whether preserving persistent components while attenuating oscillatory and residual components is beneficial. Across datasets, the diagonal estimator, Hankel projection, and full routing rule give the most stable overall behavior.

For episodic TPT, prompts are reset after each episode. $\mathtt{TPD}$ does not carry adapted prompt parameters across episodes; it only maintains lightweight update-history statistics for trajectory analysis. If these update-history statistics are ablated and reset after every episode as well, no valid cross-step trajectory is available. In this case, $\mathtt{TPD}$ applies identity control, \textit{i.e.}, $g_t=1$ and $\mathbf u_t^{\mathrm{hur}}=\mathbf u_t$, so the resulting method is exactly the corresponding episodic baseline. Therefore, this ablation is already represented by the baseline rows.

\begingroup
\renewcommand{\thetable}{A5}
\begin{table*}[!htbp]
\caption{Per-dataset Koopman operator ablation for VPT. \textbf{(a)} Domain generalization.}
\label{app:A5:dg}
\begin{center}\resizebox{0.95\textwidth}{!}{%
\begin{tabular}{l ccccc c}
\toprule
Variant & ImageNet & ImageNet-V2 & ImageNet-Sketch & ImageNet-A & ImageNet-R & OoD$_{4}$ \\
\midrule
Baseline & 78.16 & 64.55 & 32.77 & 19.17 & 42.14 & 39.66 \\
Full Koopman & 79.74 & 69.39 & 35.83 & 23.73 & 45.36 & 43.58 \\
VPT+$\mathtt{AKC}$ (diag.) & 80.94 & 69.70 & 38.33 & 28.47 & 47.17 & 45.92 \\
VPT+$\mathtt{TPD}$ & 81.31 & 69.87 & 36.02 & 38.98 & 46.47 & \textbf{47.84} \\
\bottomrule
\end{tabular}%
}
\end{center}
\end{table*}
\endgroup
\FloatBarrier

\begingroup
\renewcommand{\thetable}{A5}
\begin{table*}[!htbp]
\caption{Per-dataset Koopman operator ablation for VPT. \textbf{(b)} Cross-dataset transfer.}
\label{app:A5:csd}
\begin{center}\resizebox{\textwidth}{!}{%
\begin{tabular}{l cccccccccc c}
\toprule
Variant & Caltech & Pets & Cars & Flowers & Food & Aircraft & SUN & DTD & EuroSAT & UCF & Mean \\
\midrule
Baseline & 92.65 & 86.89 & 60.68 & 64.86 & 85.14 & 21.21 & 59.92 & 38.19 & 20.91 & 59.28 & 58.97 \\
Full Koopman & 93.67 & 88.14 & 61.30 & 66.34 & 85.88 & 24.63 & 62.44 & 39.78 & 24.84 & 61.56 & 60.86 \\
VPT+$\mathtt{AKC}$ (diag.) & 94.24 & 89.78 & 62.33 & 66.83 & 86.03 & 25.23 & 62.15 & 40.96 & 26.95 & 62.94 & 61.74 \\
VPT+$\mathtt{TPD}$ & 95.37 & 89.38 & 62.81 & 68.09 & 86.45 & 24.46 & 63.79 & 42.25 & 32.17 & 62.01 & \textbf{62.68} \\
\bottomrule
\end{tabular}%
}
\end{center}
\end{table*}
\endgroup
\FloatBarrier

\begingroup
\renewcommand{\thetable}{A6}
\begin{table*}[!htbp]
\caption{Per-dataset Koopman operator ablation for TPT. \textbf{(a)} Domain generalization.}
\label{app:A6:dg}
\begin{center}\resizebox{0.95\textwidth}{!}{%
\begin{tabular}{l ccccc c}
\toprule
Variant & ImageNet & ImageNet-V2 & ImageNet-Sketch & ImageNet-A & ImageNet-R & OOD Mean \\
\midrule
Baseline & 68.98 & 63.45 & 47.94 & 54.77 & 77.06 & 60.81 \\
Full Koopman & 69.86 & 63.73 & 49.87 & 55.83 & 77.76 & 61.80 \\
TPT+$\mathtt{AKC}$ (diag.) & 71.28 & 64.36 & 50.52 & 56.28 & 79.09 & 62.56 \\
TPT+$\mathtt{TPD}$ & 71.19 & 65.98 & 49.62 & 56.73 & 79.98 & \textbf{63.08} \\
\bottomrule
\end{tabular}%
}
\end{center}
\end{table*}
\endgroup
\FloatBarrier

\begingroup
\renewcommand{\thetable}{A6}
\begin{table*}[!htbp]
\caption{Per-dataset Koopman operator ablation for TPT. \textbf{(b)} Cross-dataset transfer.}
\label{app:A6:csd}
\begin{center}\resizebox{\textwidth}{!}{%
\begin{tabular}{l cccccccccc c}
\toprule
Variant & Caltech & Pets & Cars & Flowers & Food & Aircraft & SUN & DTD & EuroSAT & UCF & Mean \\
\midrule
Baseline & 94.16 & 87.79 & 66.87 & 68.98 & 84.67 & 24.78 & 65.50 & 47.75 & 42.44 & 68.04 & 65.10 \\
Full Koopman & 95.29 & 88.91 & 68.05 & 70.44 & 85.86 & 26.52 & 66.46 & 48.17 & 44.65 & 69.47 & 66.38 \\
TPT+$\mathtt{AKC}$ (diag.) & 95.42 & 89.18 & 68.77 & 70.60 & 88.29 & 26.43 & 68.19 & 50.30 & 44.95 & 69.97 & 67.21 \\
TPT+$\mathtt{TPD}$ & 96.32 & 90.31 & 68.55 & 71.06 & 88.32 & 27.19 & 68.51 & 51.98 & 46.03 & 72.00 & \textbf{68.03} \\
\bottomrule
\end{tabular}%
}
\end{center}
\end{table*}
\endgroup
\FloatBarrier

\begingroup
\renewcommand{\thetable}{A7}
\begin{table*}[!htbp]
\caption{Per-dataset projection ablation for VPT. \textbf{(a)} Domain generalization.}
\label{app:A7:dg}
\begin{center}\resizebox{0.95\textwidth}{!}{%
\begin{tabular}{l ccccc c}
\toprule
Variant & ImageNet & ImageNet-V2 & ImageNet-Sketch & ImageNet-A & ImageNet-R & OoD$_{4}$ \\
\midrule
Baseline & 78.16 & 64.55 & 32.77 & 19.17 & 42.14 & 39.66 \\
Plain projection & 79.17 & 67.46 & 34.53 & 23.41 & 43.83 & 42.31 \\
VPT+$\mathtt{HUR}$ (Hankel) & 81.10 & 69.84 & 38.60 & 26.25 & 49.21 & 45.98 \\
VPT+$\mathtt{TPD}$ & 81.31 & 69.87 & 36.02 & 38.98 & 46.47 & \textbf{47.84} \\
\bottomrule
\end{tabular}%
}
\end{center}
\end{table*}
\endgroup
\FloatBarrier

\begingroup
\renewcommand{\thetable}{A7}
\begin{table*}[!htbp]
\caption{Per-dataset projection ablation for VPT. \textbf{(b)} Cross-dataset transfer.}
\label{app:A7:csd}
\begin{center}\resizebox{\textwidth}{!}{%
\begin{tabular}{l cccccccccc c}
\toprule
Variant & Caltech & Pets & Cars & Flowers & Food & Aircraft & SUN & DTD & EuroSAT & UCF & Mean \\
\midrule
Baseline & 92.65 & 86.89 & 60.68 & 64.86 & 85.14 & 21.21 & 59.92 & 38.19 & 20.91 & 59.28 & 58.97 \\
Plain projection & 93.87 & 86.92 & 61.57 & 65.41 & 85.84 & 23.01 & 61.29 & 39.95 & 24.33 & 60.03 & 60.22 \\
VPT+$\mathtt{HUR}$ (Hankel) & 95.21 & 89.26 & 64.99 & 67.07 & 86.85 & 25.26 & 64.69 & 40.37 & 26.81 & 61.30 & 62.18 \\
VPT+$\mathtt{TPD}$ & 95.37 & 89.38 & 62.81 & 68.09 & 86.45 & 24.46 & 63.79 & 42.25 & 32.17 & 62.01 & \textbf{62.68} \\
\bottomrule
\end{tabular}%
}
\end{center}
\end{table*}
\endgroup
\FloatBarrier

\begingroup
\renewcommand{\thetable}{A8}
\begin{table*}[!htbp]
\caption{Per-dataset projection ablation for TPT. \textbf{(a)} Domain generalization.}
\label{app:A8:dg}
\begin{center}\resizebox{0.95\textwidth}{!}{%
\begin{tabular}{l ccccc c}
\toprule
Variant & ImageNet & ImageNet-V2 & ImageNet-Sketch & ImageNet-A & ImageNet-R & OOD Mean \\
\midrule
Baseline & 68.98 & 63.45 & 47.94 & 54.77 & 77.06 & 60.81 \\
Plain projection & 69.46 & 64.31 & 48.65 & 54.99 & 77.78 & 61.43 \\
TPT+$\mathtt{HUR}$ (Hankel) & 70.77 & 65.31 & 50.24 & 56.77 & 79.05 & 62.84 \\
TPT+$\mathtt{TPD}$ & 71.19 & 65.98 & 49.62 & 56.73 & 79.98 & \textbf{63.08} \\
\bottomrule
\end{tabular}%
}
\end{center}
\end{table*}
\endgroup
\FloatBarrier

\begingroup
\renewcommand{\thetable}{A8}
\begin{table*}[!htbp]
\caption{Per-dataset projection ablation for TPT. \textbf{(b)} Cross-dataset transfer.}
\label{app:A8:csd}
\begin{center}\resizebox{\textwidth}{!}{%
\begin{tabular}{l cccccccccc c}
\toprule
Variant & Caltech & Pets & Cars & Flowers & Food & Aircraft & SUN & DTD & EuroSAT & UCF & Mean \\
\midrule
Baseline & 94.16 & 87.79 & 66.87 & 68.98 & 84.67 & 24.78 & 65.50 & 47.75 & 42.44 & 68.04 & 65.10 \\
Plain projection & 94.77 & 88.17 & 67.42 & 70.20 & 85.43 & 26.04 & 66.19 & 48.76 & 43.56 & 69.52 & 66.01 \\
TPT+$\mathtt{HUR}$ (Hankel) & 95.25 & 89.07 & 67.98 & 71.74 & 86.68 & 27.18 & 68.60 & 53.01 & 45.59 & 70.34 & 67.54 \\
TPT+$\mathtt{TPD}$ & 96.32 & 90.31 & 68.55 & 71.06 & 88.32 & 27.19 & 68.51 & 51.98 & 46.03 & 72.00 & \textbf{68.03} \\
\bottomrule
\end{tabular}%
}
\end{center}
\end{table*}
\endgroup
\FloatBarrier

\begingroup
\renewcommand{\thetable}{A9}
\begin{table*}[!htbp]
\caption{Per-dataset $\mathtt{HUR}$ routing ablation for VPT. \textbf{(a)} Domain generalization.}
\label{app:A9:dg}
\begin{center}\resizebox{0.95\textwidth}{!}{%
\begin{tabular}{l ccccc c}
\toprule
Variant & ImageNet & ImageNet-V2 & ImageNet-Sketch & ImageNet-A & ImageNet-R & OoD$_{4}$ \\
\midrule
Baseline & 78.16 & 64.55 & 32.77 & 19.17 & 42.14 & 39.66 \\
w/o Pres. & 81.33 & 69.32 & 35.99 & 36.51 & 46.41 & 47.06 \\
w/o Osc. & 81.36 & 69.38 & 36.48 & 36.65 & 46.61 & 47.28 \\
w/o Res. & 81.02 & 69.08 & 35.25 & 36.35 & 46.68 & 46.84 \\
VPT+$\mathtt{TPD}$ & 81.31 & 69.87 & 36.02 & 38.98 & 46.47 & \textbf{47.84} \\
\bottomrule
\end{tabular}%
}
\end{center}
\end{table*}
\endgroup
\FloatBarrier

\begingroup
\renewcommand{\thetable}{A9}
\begin{table*}[!htbp]
\caption{Per-dataset $\mathtt{HUR}$ routing ablation for VPT. \textbf{(b)} Cross-dataset transfer.}
\label{app:A9:csd}
\begin{center}\resizebox{\textwidth}{!}{%
\begin{tabular}{l cccccccccc c}
\toprule
Variant & Caltech & Pets & Cars & Flowers & Food & Aircraft & SUN & DTD & EuroSAT & UCF & Mean \\
\midrule
Baseline & 92.65 & 86.89 & 60.68 & 64.86 & 85.14 & 21.21 & 59.92 & 38.19 & 20.91 & 59.28 & 58.97 \\
w/o Pres. & 95.42 & 88.96 & 62.50 & 67.97 & 85.98 & 24.54 & 63.06 & 41.19 & 31.81 & 61.49 & 62.29 \\
w/o Osc. & 95.66 & 88.09 & 62.19 & 68.21 & 86.34 & 24.00 & 61.89 & 41.78 & 33.26 & 62.28 & 62.37 \\
w/o Res. & 94.69 & 88.96 & 62.16 & 67.84 & 86.01 & 23.85 & 63.08 & 43.03 & 30.81 & 61.64 & 62.21 \\
VPT+$\mathtt{TPD}$ & 95.37 & 89.38 & 62.81 & 68.09 & 86.45 & 24.46 & 63.79 & 42.25 & 32.17 & 62.01 & \textbf{62.68} \\
\bottomrule
\end{tabular}%
}
\end{center}
\end{table*}
\endgroup
\FloatBarrier

\begingroup
\renewcommand{\thetable}{A10}
\begin{table*}[!htbp]
\caption{Per-dataset $\mathtt{HUR}$ routing ablation for TPT. \textbf{(a)} Domain generalization.}
\label{app:A10:dg}
\begin{center}\resizebox{0.95\textwidth}{!}{%
\begin{tabular}{l ccccc c}
\toprule
Variant & ImageNet & ImageNet-V2 & ImageNet-Sketch & ImageNet-A & ImageNet-R & OOD Mean \\
\midrule
Baseline & 68.98 & 63.45 & 47.94 & 54.77 & 77.06 & 60.81 \\
w/o Pres. & 70.54 & 65.86 & 49.05 & 56.71 & 80.17 & 62.95 \\
w/o Osc. & 71.25 & 65.58 & 49.42 & 56.36 & 80.00 & 62.84 \\
w/o Res. & 71.70 & 66.02 & 49.22 & 56.36 & 79.61 & 62.80 \\
TPT+$\mathtt{TPD}$ & 71.19 & 65.98 & 49.62 & 56.73 & 79.98 & \textbf{63.08} \\
\bottomrule
\end{tabular}%
}
\end{center}
\end{table*}
\endgroup
\FloatBarrier

\begingroup
\renewcommand{\thetable}{A10}
\begin{table*}[!htbp]
\caption{Per-dataset $\mathtt{HUR}$ routing ablation for TPT. \textbf{(b)} Cross-dataset transfer.}
\label{app:A10:csd}
\begin{center}\resizebox{\textwidth}{!}{%
\begin{tabular}{l cccccccccc c}
\toprule
Variant & Caltech & Pets & Cars & Flowers & Food & Aircraft & SUN & DTD & EuroSAT & UCF & Mean \\
\midrule
Baseline & 94.16 & 87.79 & 66.87 & 68.98 & 84.67 & 24.78 & 65.50 & 47.75 & 42.44 & 68.04 & 65.10 \\
w/o Pres. & 96.27 & 90.35 & 68.00 & 70.65 & 88.08 & 27.06 & 68.88 & 51.65 & 45.48 & 70.90 & 67.73 \\
w/o Osc. & 96.92 & 89.89 & 68.10 & 70.36 & 89.18 & 26.16 & 68.99 & 52.01 & 46.30 & 70.82 & 67.87 \\
w/o Res. & 96.47 & 89.62 & 68.19 & 70.24 & 87.38 & 27.48 & 67.96 & 51.24 & 46.06 & 70.79 & 67.54 \\
TPT+$\mathtt{TPD}$ & 96.32 & 90.31 & 68.55 & 71.06 & 88.32 & 27.19 & 68.51 & 51.98 & 46.03 & 72.00 & \textbf{68.03} \\
\bottomrule
\end{tabular}%
}
\end{center}
\end{table*}
\endgroup
\FloatBarrier

\subsection{Hyperparameter Sensitivity}
\label{app:hyper_sensitivity}
We provide per-dataset sensitivity analyses for the three groups of $\mathtt{TPD}$ hyperparameters: the Hankel window $(K,M)$, the $\mathtt{AKC}$ control parameters $(\bar\rho,g_{\min})$, and the $\mathtt{HUR}$ routing weights $(\beta_c,\beta_n)$. All heatmaps are normalized within each panel for visualization, so warmer colors indicate better performance for the same dataset and baseline. Overall, the high-performing regions are broad rather than isolated, showing that $\mathtt{TPD}$ is not sensitive to a single carefully tuned setting.
All main-table results use a single fixed $\mathtt{TPD}$ configuration across datasets, prompt modalities, backbone families, and baselines. The sensitivity sweeps in this section are diagnostic analyses for robustness, not per-dataset or per-baseline hyperparameter selection. Thus, the reported gains do not rely on choosing target-specific controller parameters.

\paragraph{Hankel window.}
We first sweep the Hankel delay length $K$ and the number of Hankel columns $M$. Across both VPT and TPT, very short histories provide insufficient temporal context, while overly long histories may mix outdated updates into the local model. In contrast, short-to-moderate windows consistently form a stable high-performing region across both DG and CSD datasets. The default setting $K=3$ and $M=8$ lies inside this region, supporting its use as a fixed configuration across prompt types and datasets.

\begin{figure*}[!htbp]
\centering
\begin{minipage}{0.49\linewidth}
\centering
\includegraphics[width=\linewidth]{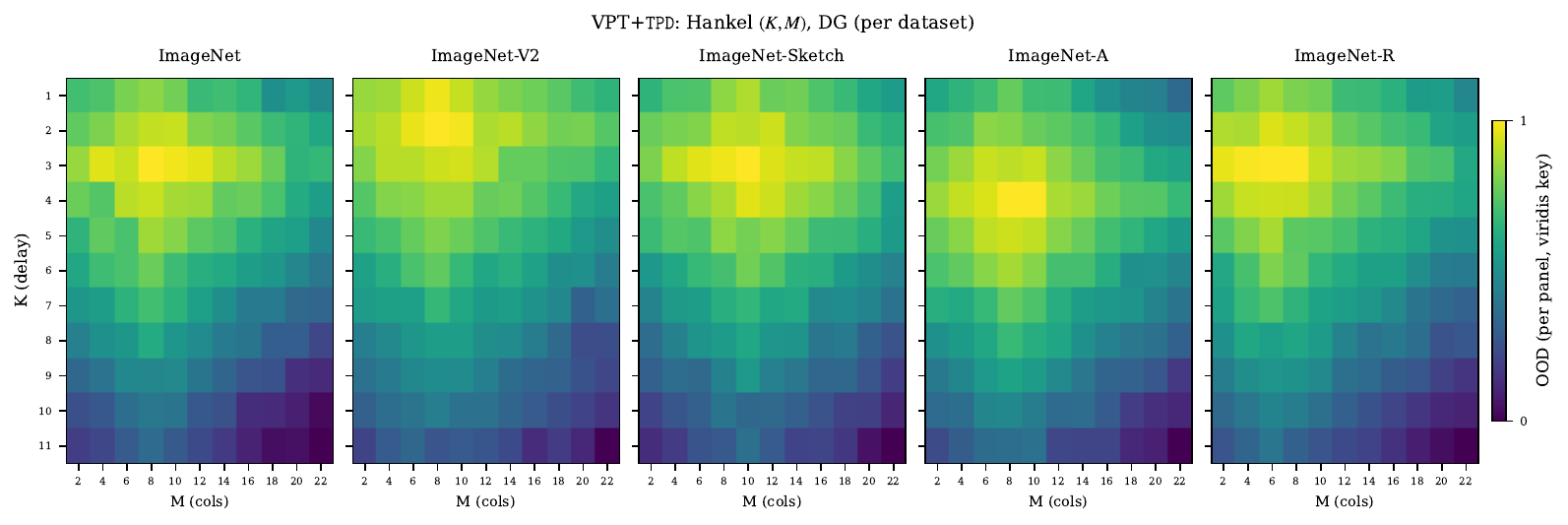}\\
\small (a) VPT+$\mathtt{TPD}$, domain generalization
\end{minipage}\hfill
\begin{minipage}{0.49\linewidth}
\centering
\includegraphics[width=\linewidth]{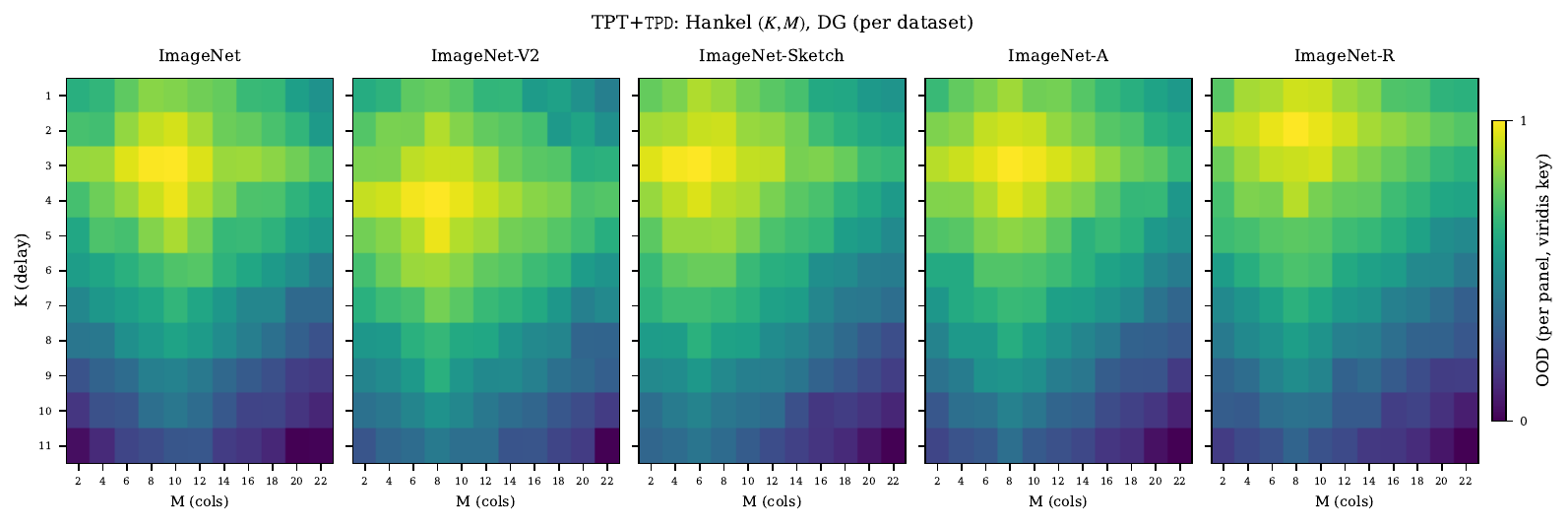}\\
\small (b) TPT+$\mathtt{TPD}$, domain generalization
\end{minipage}
\caption{Sensitivity to the Hankel window on domain-generalization datasets. Each heatmap reports per-dataset performance of VPT+$\mathtt{TPD}$ or TPT+$\mathtt{TPD}$ while sweeping the delay length $K$ and the number of Hankel columns $M$. Colors are normalized within each panel, with warmer colors indicating higher accuracy. Both prompt types show a broad high-performing region at short-to-moderate windows, supporting the default $K=3,M=8$.}
\label{fig:appendixA1perDG}
\end{figure*}

\begin{figure*}[!htbp]
\centering
\begin{minipage}{0.49\linewidth}
\centering
\includegraphics[width=\linewidth]{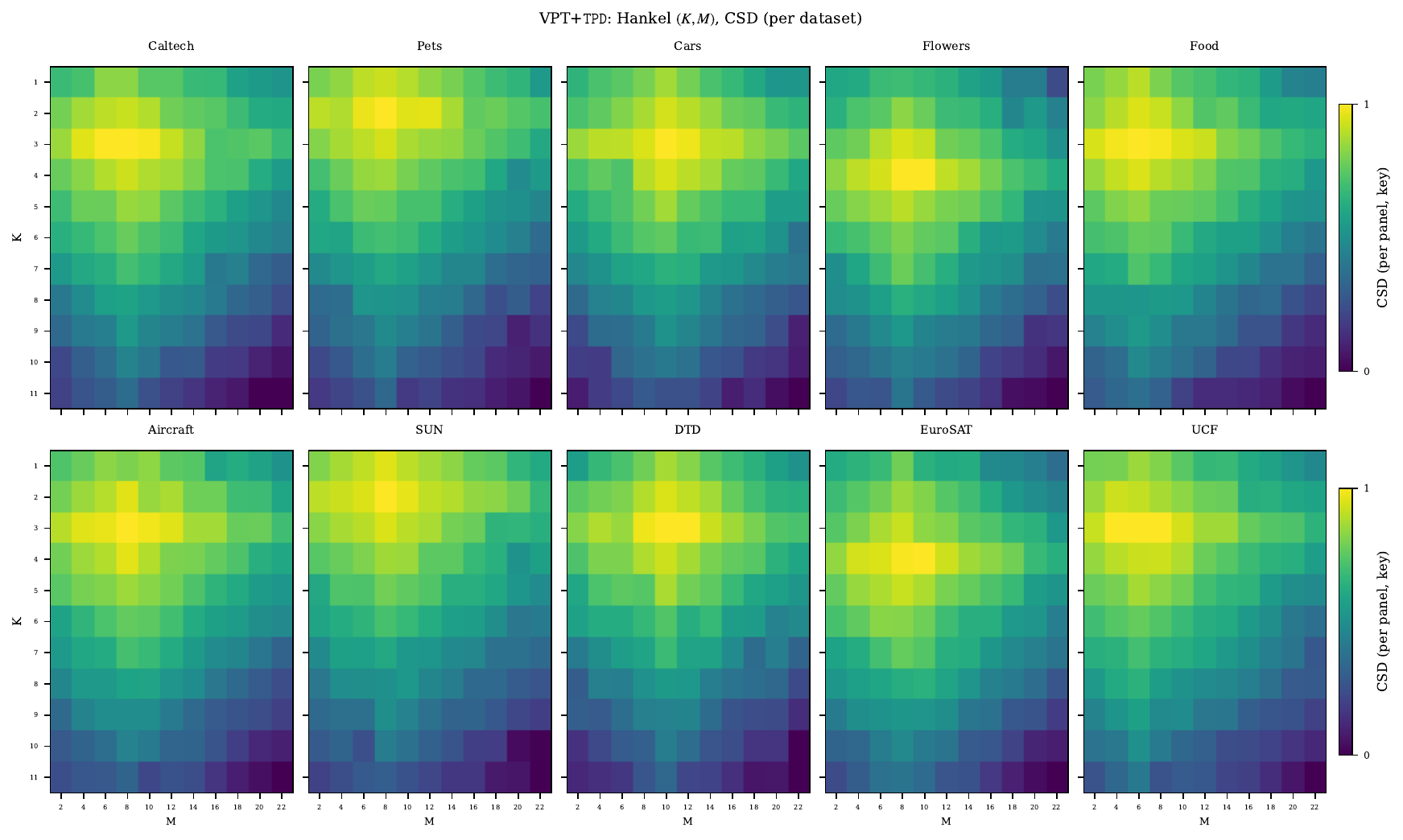}\\
\small (a) VPT+$\mathtt{TPD}$, CSD
\end{minipage}\hfill
\begin{minipage}{0.49\linewidth}
\centering
\includegraphics[width=\linewidth]{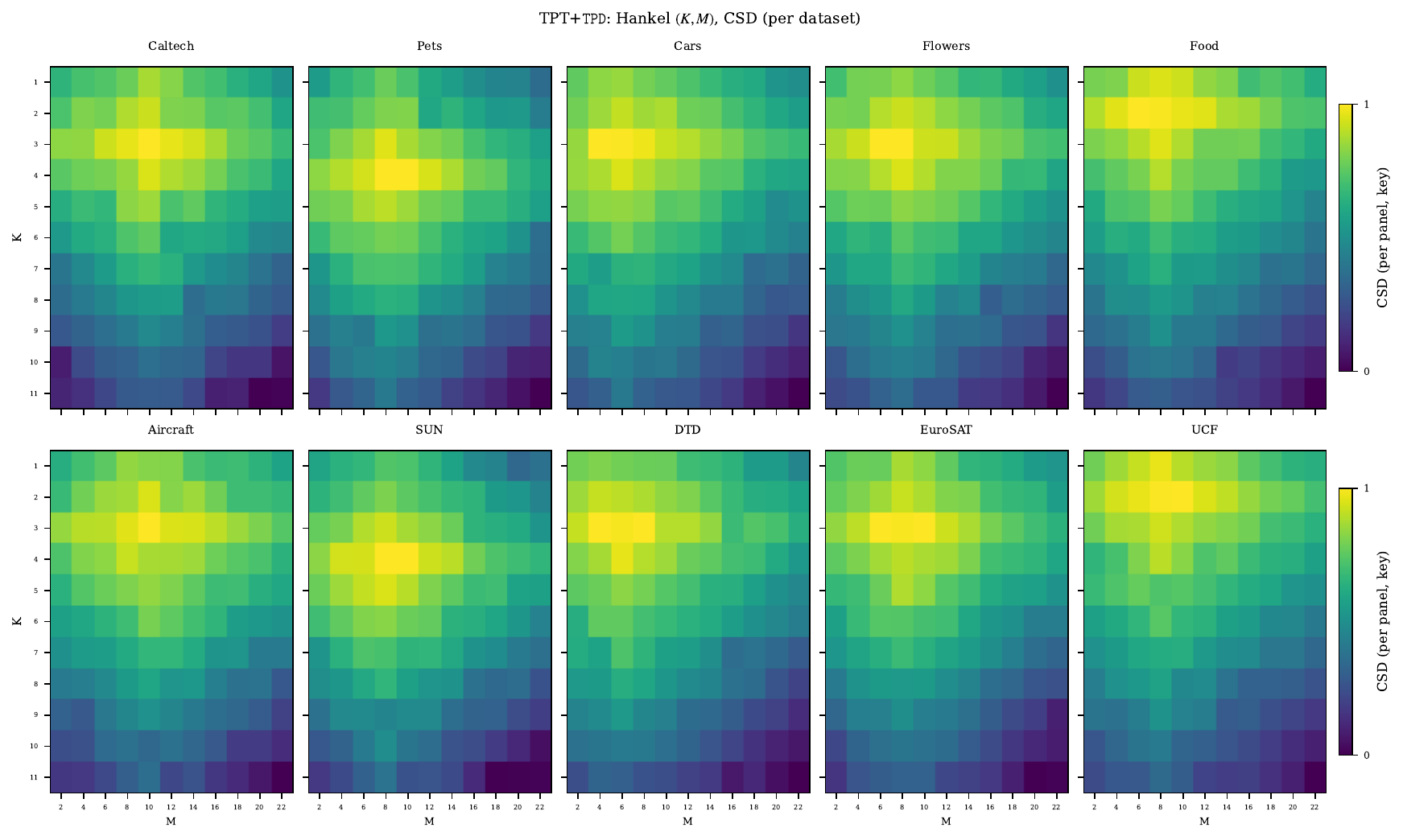}\\
\small (b) TPT+$\mathtt{TPD}$, CSD
\end{minipage}
\caption{Sensitivity to the Hankel window on cross-dataset transfer. The same $(K,M)$ sweep is evaluated on ten downstream datasets. The high-performing region remains broad across datasets and prompt types, while excessively short or long histories are less reliable.}
\label{fig:appendixA1perCSD}
\end{figure*}

\paragraph{AKC threshold and gain floor.}
We next sweep the $\mathtt{AKC}$ stability threshold $\bar\rho$ and the minimum gain $g_{\min}$. The results show a consistent plateau around $\bar\rho=1.0$ with a small gain floor. This matches the intended role of $\mathtt{AKC}$: the controller should intervene when the local Koopman dynamics become expansive, but it should not over-dampen stable updates. Large $g_{\min}$ values weaken this intervention by preventing sufficient shrinkage, while overly strict thresholds can suppress useful updates. The default $\bar\rho=1.0$ and $g_{\min}=0.1$ balance these two effects.

\begin{figure*}[!htbp]
\centering
\begin{minipage}{0.49\linewidth}
\centering
\includegraphics[width=\linewidth]{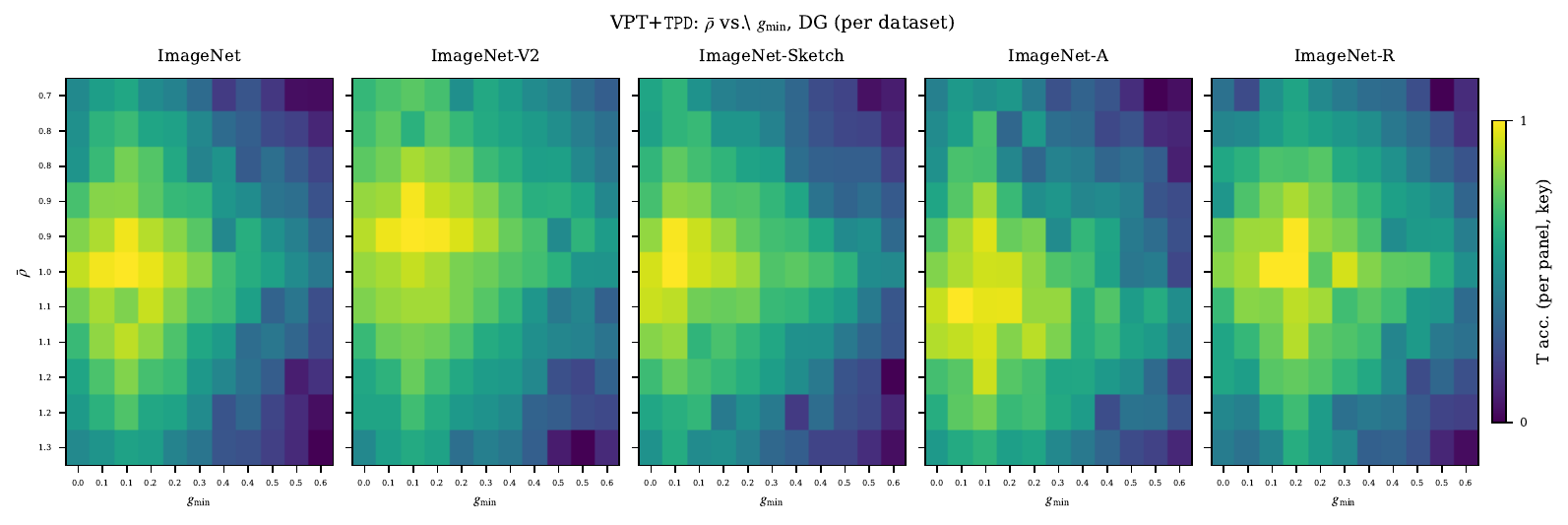}\\
\small (a) VPT+$\mathtt{TPD}$, domain generalization
\end{minipage}\hfill
\begin{minipage}{0.49\linewidth}
\centering
\includegraphics[width=\linewidth]{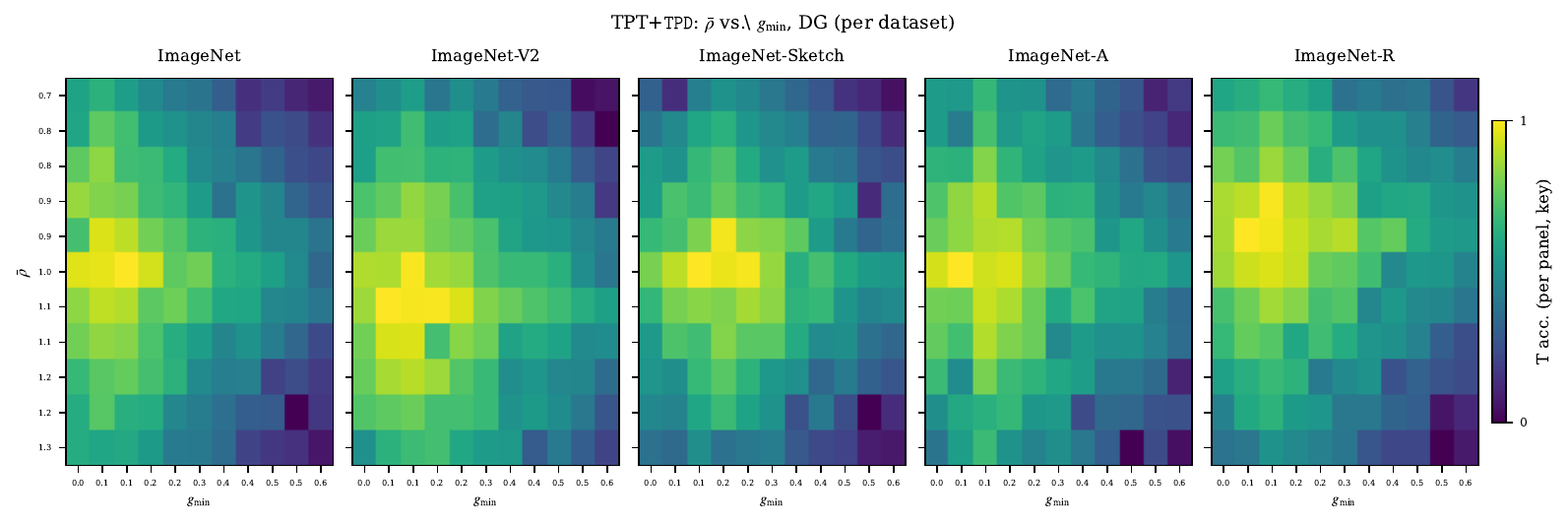}\\
\small (b) TPT+$\mathtt{TPD}$, domain generalization
\end{minipage}
\caption{Sensitivity of $\mathtt{AKC}$ on domain-generalization datasets. Each heatmap sweeps the spectral-radius threshold $\bar\rho$ and gain floor $g_{\min}$ for VPT+$\mathtt{TPD}$ or TPT+$\mathtt{TPD}$. Colors are normalized within each panel. Performance is strongest in a broad region around $\bar\rho=1.0$ with a small gain floor, indicating that $\mathtt{AKC}$ benefits from suppressing locally expansive dynamics without over-damping stable updates.}
\label{fig:appendixA2perDG}
\end{figure*}

\begin{figure*}[!htbp]
\centering
\begin{minipage}{0.49\linewidth}
\centering
\includegraphics[width=\linewidth]{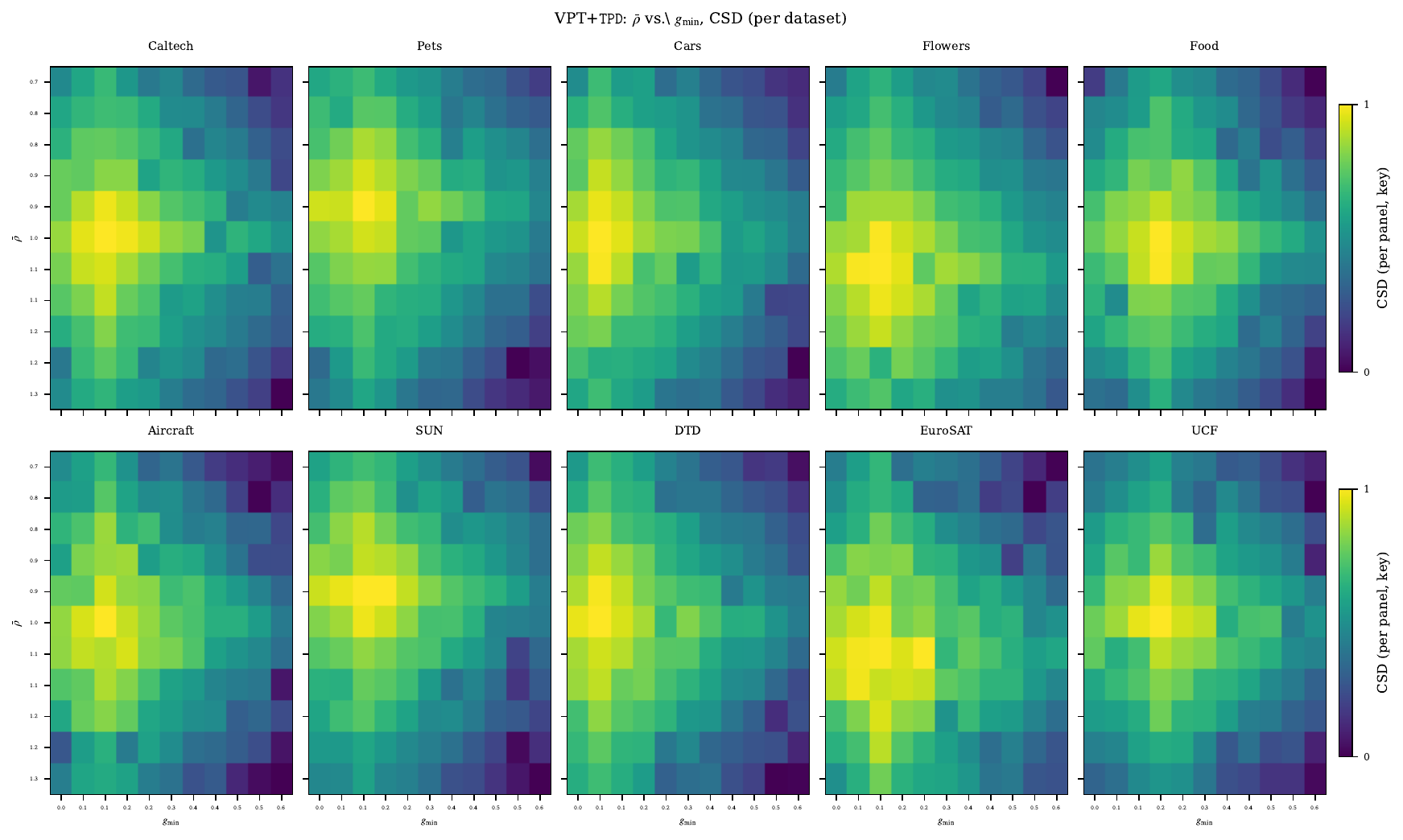}\\
\small (a) VPT+$\mathtt{TPD}$, CSD
\end{minipage}\hfill
\begin{minipage}{0.49\linewidth}
\centering
\includegraphics[width=\linewidth]{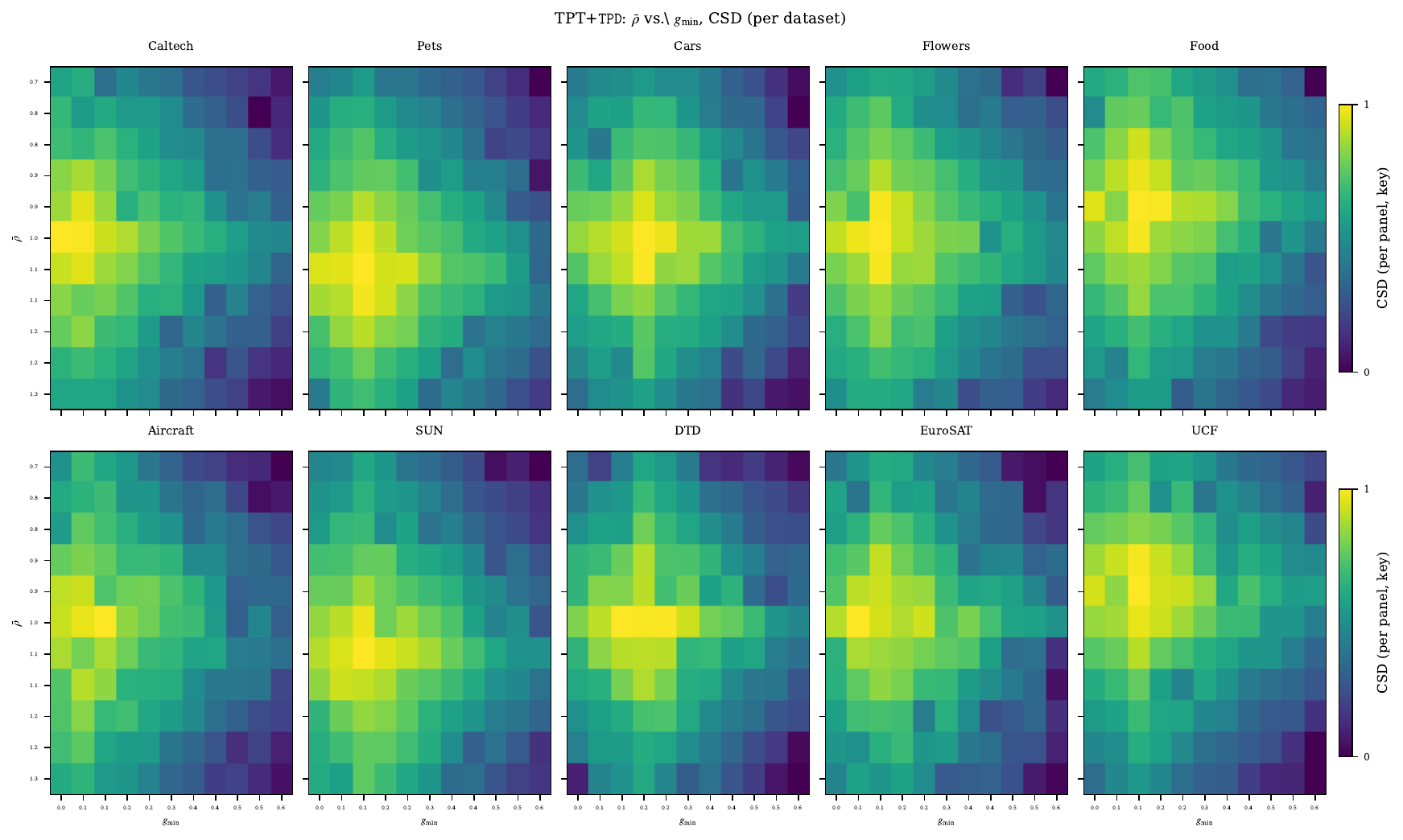}\\
\small (b) TPT+$\mathtt{TPD}$, CSD
\end{minipage}
\caption{Sensitivity of $\mathtt{AKC}$ on cross-dataset transfer. The $\bar\rho$--$g_{\min}$ sweep shows that moderate stability control is robust across datasets, whereas a large gain floor weakens the controller and overly strict settings may suppress useful adaptation.}
\label{fig:appendixA2perCSD}
\end{figure*}

\paragraph{HUR routing weights.}
Finally, we sweep the $\mathtt{HUR}$ routing weights under the constraint $0\le\beta_n\le\beta_c\le1$. The best regions favor strong suppression of residual components and moderate attenuation of oscillatory components. This agrees with the design of $\mathtt{HUR}$: persistent components are preserved with unit weight, oscillatory components are attenuated rather than discarded, and poorly explained residual components are suppressed more strongly. Performance degrades when both weights approach one, where $\mathtt{HUR}$ becomes close to the raw update and loses its selective-routing effect.

\begin{figure*}[!htbp]
\centering
\begin{minipage}{0.49\linewidth}
\centering
\includegraphics[width=\linewidth]{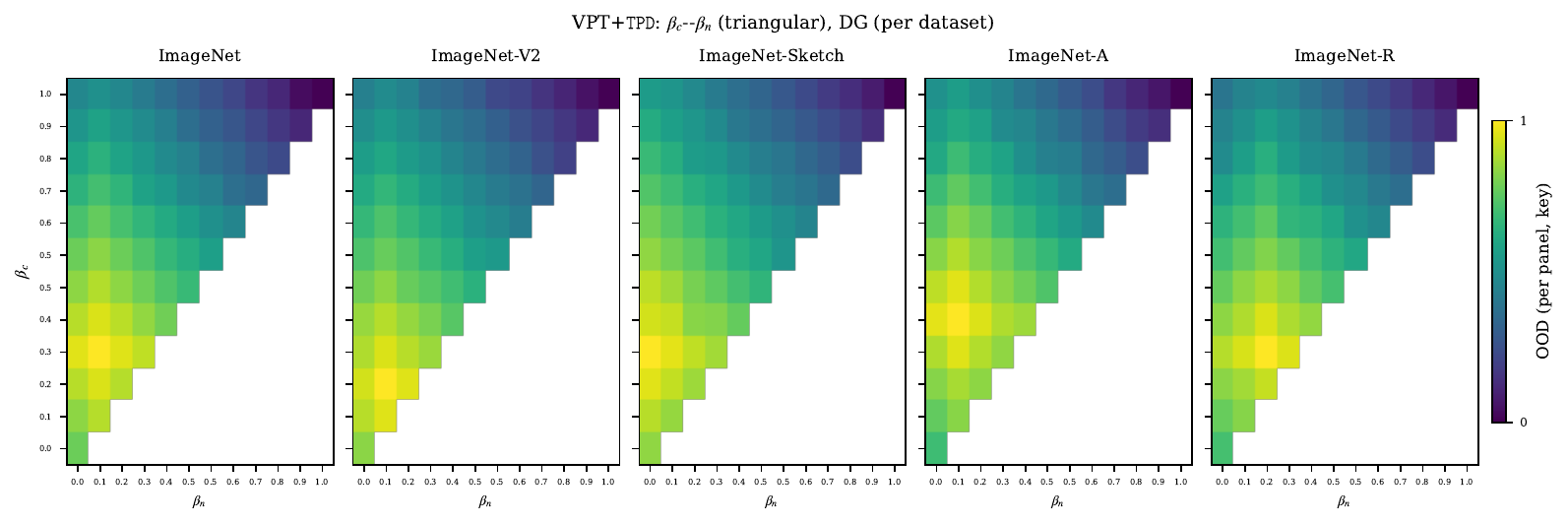}\\
\small (a) VPT+$\mathtt{TPD}$, domain generalization
\end{minipage}\hfill
\begin{minipage}{0.49\linewidth}
\centering
\includegraphics[width=\linewidth]{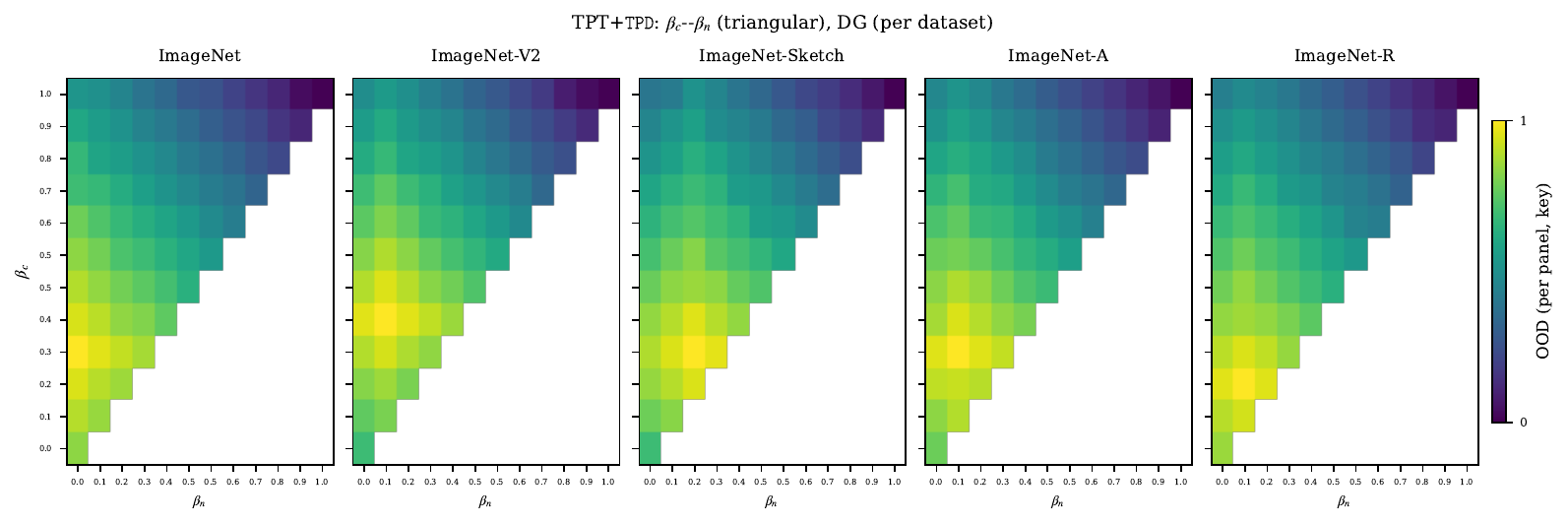}\\
\small (b) TPT+$\mathtt{TPD}$, domain generalization
\end{minipage}
\caption{Sensitivity of $\mathtt{HUR}$ routing weights on domain-generalization datasets. Each heatmap sweeps the triangular grid $0\le\beta_n\le\beta_c\le1$, where $\beta_c$ controls the oscillatory component and $\beta_n$ controls the residual component; the masked region has $\beta_n>\beta_c$. Warmer colors indicate higher per-panel accuracy. The high-performing region favors small residual weights and moderate oscillatory attenuation, supporting selective routing rather than using the raw update unchanged.}
\label{fig:appendixA3perDG}
\end{figure*}

\begin{figure*}[!htbp]
\centering
\begin{minipage}{0.49\linewidth}
\centering
\includegraphics[width=\linewidth]{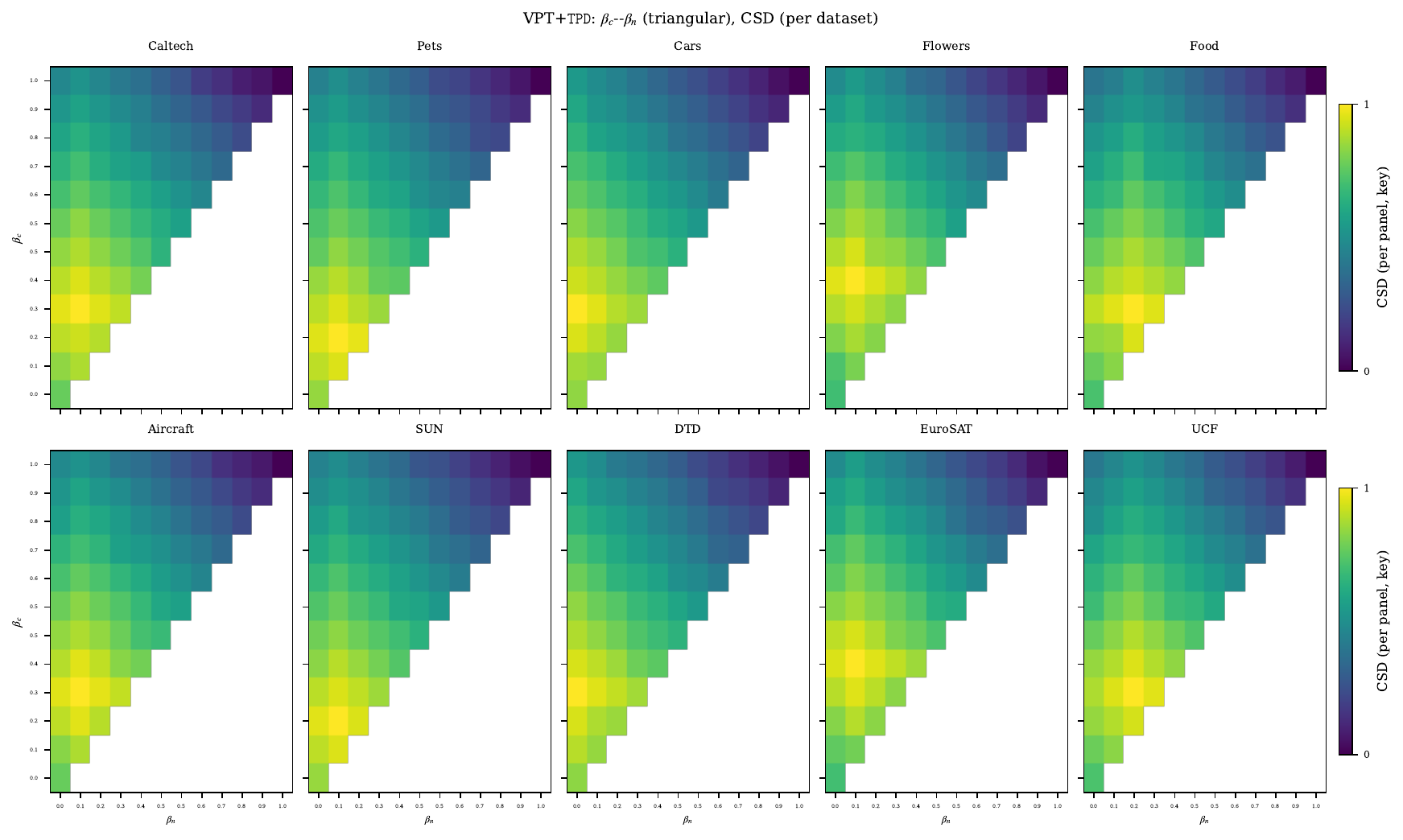}\\
\small (a) VPT+$\mathtt{TPD}$, CSD
\end{minipage}\hfill
\begin{minipage}{0.49\linewidth}
\centering
\includegraphics[width=\linewidth]{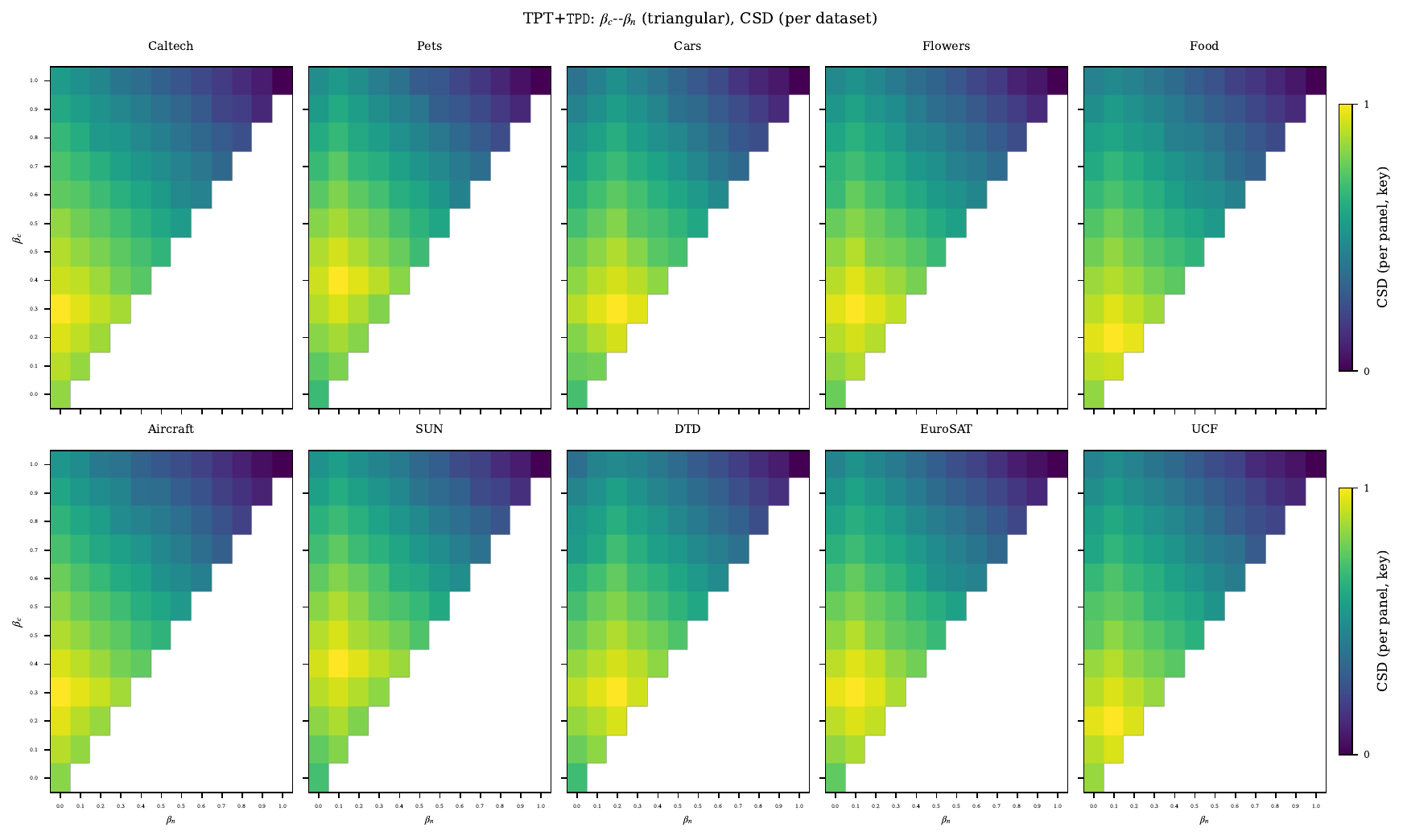}\\
\small (b) TPT+$\mathtt{TPD}$, CSD
\end{minipage}
\caption{Sensitivity of $\mathtt{HUR}$ routing weights on cross-dataset transfer. The triangular $\beta_c$--$\beta_n$ sweep is consistent across VPT+$\mathtt{TPD}$ and TPT+$\mathtt{TPD}$: retaining persistent components while attenuating oscillatory and residual components yields a broad high-performing region, whereas weights near one reduce $\mathtt{HUR}$ to weak routing and degrade performance.}
\label{fig:appendixA3perCSD}
\end{figure*}

\subsection{Persistent Ratio and Adaptation Gain}
\label{app:persistent_gain}
We further examine the link between persistent ratio and adaptation gain. The persistent ratio measures the fraction of a prompt update aligned with locally consistent directions in the recent update trajectory. A larger value indicates more stable adaptation content and less oscillatory or residual content.

For each dataset, we plot the persistent ratio against the accuracy gain obtained by $\mathtt{TPD}$. The red dashed curve is a piecewise linear fitted trend line over the persistent-ratio axis. Across TPT, the gain generally increases from low to moderate persistent ratios and reaches a peak at an intermediate range, after which the trend saturates or slightly decreases on several datasets. This pattern suggests that persistent ratio is a useful trajectory-level diagnostic, while excessively persistent updates may no longer bring additional adaptation benefit.

\begin{figure*}[!htbp]
\centering
\includegraphics[width=\linewidth]{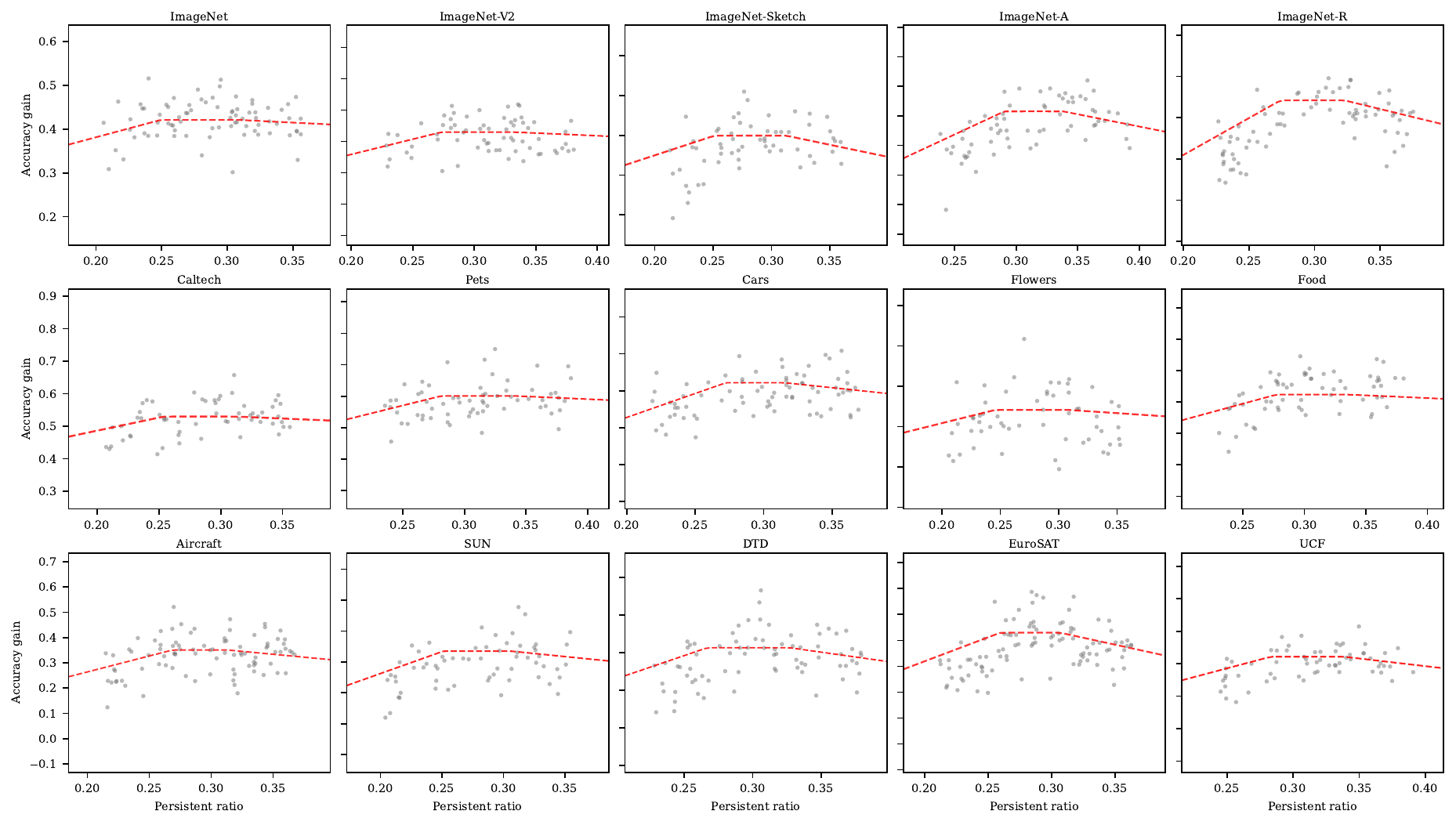}
\caption{Persistent ratio versus accuracy gain for TPT+$\mathtt{TPD}$. Each point corresponds to a local trajectory segment, and the red dashed line shows the fitted trend for each dataset.}
\label{fig:appendixA5_persistent_gain_vpt}
\end{figure*}

\subsection{Nonstationary Streams and Test-Order Variation}
\label{app:rb_selected_streams}

We evaluate whether recent update history remains useful under abrupt,
mixed, and rapidly alternating domain shifts. Each stream is evaluated
using five paired test orders shared by VPT and VPT+TPD.
Table~\ref{app:A11:streams} shows that TPD improves overall accuracy by
$1.42$, $1.53$, and $4.44$ percentage points, respectively, while reducing
the standard deviation across orders in all three settings.
Under Rapid Alternation, First-10 accuracy also increases from
$45.82\%$ to $50.09\%$, indicating improved adaptation immediately after
domain switches. These results suggest that finite-window history
remains informative under the tested distribution changes, including
a domain switch after every update.

\begingroup
\renewcommand{\thetable}{A11}
\begin{table*}[!htbp]
\caption{Accuracy (\%) under nonstationary streams. Overall reports
mean $\pm$ standard deviation across five paired test orders; the other
accuracy columns report the corresponding means.}
\label{app:A11:streams}
\label{tab:rb_selected_streams}
\begin{center}\resizebox{0.95\textwidth}{!}{%
\begin{tabular}{llcccc}
\toprule
Stream & Method & Overall & V2 & Sketch & First-10 \\
\midrule
Abrupt & VPT & $52.90\pm0.37$ & 64.52 & 29.67 & 47.28 \\
 & VPT+TPD & $54.32\pm0.32$ & 65.78 & 31.40 & 48.78 \\
\midrule
Mixed & VPT & $46.10\pm0.25$ & 65.09 & 27.11 & N/A \\
 & VPT+TPD & $47.63\pm0.19$ & 66.19 & 29.08 & N/A \\
\midrule
Rapid Alternation & VPT & $45.22\pm0.51$ & 63.82 & 26.62 & 45.82 \\
 & VPT+TPD & $49.66\pm0.27$ & 67.64 & 31.69 & 50.09 \\
\bottomrule
\end{tabular}%
}
\end{center}
\vspace{-0.5em}
\noindent
\parbox{0.95\textwidth}{\footnotesize
\textit{Setup.} Let $V$ and $K$ denote ImageNet-V2 and ImageNet-Sketch.
Abrupt: $V_{5000}\!\rightarrow\!K_{5000}\!\rightarrow\!V_{5000}$
($15{,}000$ samples).
Mixed: a global shuffle of $10{,}000$ samples from each domain
($20{,}000$ samples).
Rapid Alternation: $V_{128}\!\rightarrow\!K_{128}\!\rightarrow\!\cdots$
($20{,}000$ samples), with a domain switch after every adaptation update.
VPT and VPT+TPD use the same five predefined test orders for each stream.
First-10 measures accuracy over the first ten batches after a domain switch
and is undefined for Mixed.
}
\end{table*}
\endgroup
\FloatBarrier

\subsection{Component Ratios under Nonstationary Shifts}
\label{app:rb_selected_components}

Table~\ref{app:A12:components} examines how the decomposition changes
under nonstationary streams. For VPT, the persistent fraction decreases
from $17.54\%$ under Abrupt to $5.09\%$ under Rapid Alternation, while
the residual fraction increases from $81.75\%$ to $93.65\%$.
Nevertheless, TPD improves accuracy in every stream, with its largest
VPT gain under Rapid Alternation. TPT also benefits despite residual
fractions exceeding $96\%$. These ratios characterize local trajectory
predictability rather than semantic correctness. The results support
soft attenuation with nonzero routing weights, which allows potentially
useful corrections to remain active when updates are weakly predictable.

\begingroup
\renewcommand{\thetable}{A12}
\begin{table*}[!htbp]
\caption{Component ratios and accuracy (\%) under nonstationary shifts.
P, O, and R denote persistent, oscillatory, and residual fractions.
Accuracy is reported as Base $\rightarrow$ Base+TPD.}
\label{app:A12:components}
\label{tab:rb_selected_components}
\begin{center}\resizebox{0.95\textwidth}{!}{%
\begin{tabular}{llcccc}
\toprule
Method & Stream & P & O & R & Accuracy \\
\midrule
VPT & Abrupt & 17.54 & 0.71 & 81.75 & $52.90\rightarrow54.32$ \\
VPT & Mixed & 12.83 & 0.48 & 86.69 & $46.10\rightarrow47.63$ \\
VPT & Rapid Alternation & 5.09 & 1.26 & 93.65 & $45.22\rightarrow49.66$ \\
\midrule
TPT & Abrupt & 3.30 & 0.45 & 96.25 & $57.31\rightarrow57.92$ \\
TPT & Mixed & 3.16 & 0.44 & 96.40 & $54.85\rightarrow55.58$ \\
TPT & Rapid Alternation & 3.19 & 0.46 & 96.35 & $55.09\rightarrow55.82$ \\
\bottomrule
\end{tabular}%
}
\end{center}
\vspace{-0.5em}
\noindent
\parbox{0.95\textwidth}{\footnotesize
\textit{Note.} Streams follow Table~\ref{app:A11:streams}.
The component ratios describe local trajectory predictability under the
fitted model rather than the semantic correctness of the updates.
}
\end{table*}
\endgroup
\FloatBarrier

\subsection{Variation across Random Seeds}
\label{app:rb_selected_seeds}

We repeat VPT and TPT on ImageNet-A/R using seeds $4827$, $7316$,
and $9054$. Table~\ref{app:A13:seeds} reports the mean and standard
deviation across these independent runs. TPD improves mean accuracy
in all four settings and reduces the standard deviation in three.
For VPT on ImageNet-A, accuracy increases from
$19.25\pm1.97\%$ to $38.73\pm1.28\%$.
For TPT on ImageNet-A, the standard deviation increases slightly
from $1.12$ to $1.16$, showing that improved mean accuracy does not
uniformly imply lower run-to-run variation.

\begingroup
\renewcommand{\thetable}{A13}
\begin{table*}[!htbp]
\caption{Accuracy (\%) as mean $\pm$ standard deviation across
three random seeds ($4827$, $7316$, and $9054$). Base denotes the
corresponding VPT or TPT method without TPD.}
\label{app:A13:seeds}
\label{tab:rb_selected_seeds}
\begin{center}\resizebox{0.95\textwidth}{!}{%
\begin{tabular}{lcccc}
\toprule
 & \multicolumn{2}{c}{VPT} & \multicolumn{2}{c}{TPT} \\
\cmidrule(lr){2-3}\cmidrule(lr){4-5}
Method & ImageNet-A & ImageNet-R & ImageNet-A & ImageNet-R \\
\midrule
Base & $19.25\pm1.97$ & $42.32\pm0.61$ & $54.44\pm1.12$ & $77.01\pm2.05$ \\
Base+TPD & $38.73\pm1.28$ & $46.54\pm0.54$ & $56.62\pm1.16$ & $79.63\pm1.89$ \\
\bottomrule
\end{tabular}%
}
\end{center}
\end{table*}
\endgroup
\FloatBarrier

\subsection{Comparison with Adaptive Magnitude Controllers}
\label{app:rb_selected_controllers}

We compare TPD with Adaptive Clipping and Variance-aware Scaling,
two label-free magnitude controllers using the same base updates
and evaluation streams. Table~\ref{app:A14:controllers} shows that
TPD achieves the highest accuracy in all four settings.
For VPT on ImageNet-A, TPD reaches $38.98\%$, compared with
$23.52\%$ for Adaptive Clipping and $22.76\%$ for
Variance-aware Scaling. These comparisons complement the component
ablations by showing that the tested magnitude-only controllers
do not fully account for the gains obtained through joint magnitude
control and directional routing.

\begingroup
\renewcommand{\thetable}{A14}
\begin{table*}[!htbp]
\caption{Accuracy (\%) of adaptive magnitude controllers using the
same base updates and evaluation streams. Base denotes the
corresponding VPT or TPT method without a controller.}
\label{app:A14:controllers}
\label{tab:rb_selected_controllers}
\begin{center}\resizebox{0.95\textwidth}{!}{%
\begin{tabular}{lcccc}
\toprule
 & \multicolumn{2}{c}{VPT} & \multicolumn{2}{c}{TPT} \\
\cmidrule(lr){2-3}\cmidrule(lr){4-5}
Controller & ImageNet-A & ImageNet-R & ImageNet-A & ImageNet-R \\
\midrule
Base & 19.17 & 42.14 & 54.77 & 77.06 \\
Adaptive Clipping & 23.52 & 41.92 & 54.85 & 77.31 \\
Variance-aware Scaling & 22.76 & 43.65 & 55.19 & 77.26 \\
TPD & 38.98 & 46.47 & 56.73 & 79.98 \\
\bottomrule
\end{tabular}%
}
\end{center}
\vspace{-0.5em}
\noindent
\parbox{0.95\textwidth}{\footnotesize
\textit{Controllers.} Both comparison controllers are label-free.
Adaptive Clipping applies an adaptive magnitude cap.
Variance-aware Scaling maps the running variance of recent update norms
to a multiplicative gain.
}
\end{table*}
\endgroup
\FloatBarrier

%%%%%%%%%%%%%%%%%%%%%%%%%%%%%%%%%%%%%%%%%%%%%%%%%%%%%%%%%%%%

\clearpage
\section*{NeurIPS Paper Checklist}

\begin{enumerate}

\item {\bf Claims}
    \item[] Question: Do the main claims made in the abstract and introduction accurately reflect the paper's contributions and scope?
    \item[] Answer: \answerYes{} % Replace by \answerYes{}, \answerNo{}, or \answerNA{}.
    \item[] Justification: The abstract and introduction state claims consistent with theory and experiments.
    \item[] Guidelines:
    \begin{itemize}
        \item The answer \answerNA{} means that the abstract and introduction do not include the claims made in the paper.
        \item The abstract and/or introduction should clearly state the claims made, including the contributions made in the paper and important assumptions and limitations. A \answerNo{} or \answerNA{} answer to this question will not be perceived well by the reviewers. 
        \item The claims made should match theoretical and experimental results, and reflect how much the results can be expected to generalize to other settings. 
        \item It is fine to include aspirational goals as motivation as long as it is clear that these goals are not attained by the paper. 
    \end{itemize}

\item {\bf Limitations}
    \item[] Question: Does the paper discuss the limitations of the work performed by the authors?
    \item[] Answer: \answerYes{} % Replace by \answerYes{}, \answerNo{}, or \answerNA{}.
    \item[] Justification: The conclusion discusses limitations.
    \item[] Guidelines:
    \begin{itemize}
        \item The answer \answerNA{} means that the paper has no limitation while the answer \answerNo{} means that the paper has limitations, but those are not discussed in the paper. 
        \item The authors are encouraged to create a separate ``Limitations'' section in their paper.
        \item The paper should point out any strong assumptions and how robust the results are to violations of these assumptions (e.g., independence assumptions, noiseless settings, model well-specification, asymptotic approximations only holding locally). The authors should reflect on how these assumptions might be violated in practice and what the implications would be.
        \item The authors should reflect on the scope of the claims made, e.g., if the approach was only tested on a few datasets or with a few runs. In general, empirical results often depend on implicit assumptions, which should be articulated.
        \item The authors should reflect on the factors that influence the performance of the approach. For example, a facial recognition algorithm may perform poorly when image resolution is low or images are taken in low lighting. Or a speech-to-text system might not be used reliably to provide closed captions for online lectures because it fails to handle technical jargon.
        \item The authors should discuss the computational efficiency of the proposed algorithms and how they scale with dataset size.
        \item If applicable, the authors should discuss possible limitations of their approach to address problems of privacy and fairness.
        \item While the authors might fear that complete honesty about limitations might be used by reviewers as grounds for rejection, a worse outcome might be that reviewers discover limitations that aren't acknowledged in the paper. The authors should use their best judgment and recognize that individual actions in favor of transparency play an important role in developing norms that preserve the integrity of the community. Reviewers will be specifically instructed to not penalize honesty concerning limitations.
    \end{itemize}

\item {\bf Theory assumptions and proofs}
    \item[] Question: For each theoretical result, does the paper provide the full set of assumptions and a complete (and correct) proof?
    \item[] Answer: \answerYes{} % Replace by \answerYes{}, \answerNo{}, or \answerNA{}.
    \item[] Justification: The method and appendix provide assumptions, lemmas, theorem, and proofs.
    \item[] Guidelines:
    \begin{itemize}
        \item The answer \answerNA{} means that the paper does not include theoretical results. 
        \item All the theorems, formulas, and proofs in the paper should be numbered and cross-referenced.
        \item All assumptions should be clearly stated or referenced in the statement of any theorems.
        \item The proofs can either appear in the main paper or the supplemental material, but if they appear in the supplemental material, the authors are encouraged to provide a short proof sketch to provide intuition. 
        \item Inversely, any informal proof provided in the core of the paper should be complemented by formal proofs provided in appendix or supplemental material.
        \item Theorems and Lemmas that the proof relies upon should be properly referenced. 
    \end{itemize}

    \item {\bf Experimental result reproducibility}
    \item[] Question: Does the paper fully disclose all the information needed to reproduce the main experimental results of the paper to the extent that it affects the main claims and/or conclusions of the paper (regardless of whether the code and data are provided or not)?
    \item[] Answer: \answerYes{} % Replace by \answerYes{}, \answerNo{}, or \answerNA{}.
    \item[] Justification: The experiments and appendix specify datasets, baselines, hyperparameters, and protocols.
    \item[] Guidelines:
    \begin{itemize}
        \item The answer \answerNA{} means that the paper does not include experiments.
        \item If the paper includes experiments, a \answerNo{} answer to this question will not be perceived well by the reviewers: Making the paper reproducible is important, regardless of whether the code and data are provided or not.
        \item If the contribution is a dataset and\slash or model, the authors should describe the steps taken to make their results reproducible or verifiable. 
        \item Depending on the contribution, reproducibility can be accomplished in various ways. For example, if the contribution is a novel architecture, describing the architecture fully might suffice, or if the contribution is a specific model and empirical evaluation, it may be necessary to either make it possible for others to replicate the model with the same dataset, or provide access to the model. In general. releasing code and data is often one good way to accomplish this, but reproducibility can also be provided via detailed instructions for how to replicate the results, access to a hosted model (e.g., in the case of a large language model), releasing of a model checkpoint, or other means that are appropriate to the research performed.
        \item While NeurIPS does not require releasing code, the conference does require all submissions to provide some reasonable avenue for reproducibility, which may depend on the nature of the contribution. For example
        \begin{enumerate}
            \item If the contribution is primarily a new algorithm, the paper should make it clear how to reproduce that algorithm.
            \item If the contribution is primarily a new model architecture, the paper should describe the architecture clearly and fully.
            \item If the contribution is a new model (e.g., a large language model), then there should either be a way to access this model for reproducing the results or a way to reproduce the model (e.g., with an open-source dataset or instructions for how to construct the dataset).
            \item We recognize that reproducibility may be tricky in some cases, in which case authors are welcome to describe the particular way they provide for reproducibility. In the case of closed-source models, it may be that access to the model is limited in some way (e.g., to registered users), but it should be possible for other researchers to have some path to reproducing or verifying the results.
        \end{enumerate}
    \end{itemize}

\item {\bf Open access to data and code}
    \item[] Question: Does the paper provide open access to the data and code, with sufficient instructions to faithfully reproduce the main experimental results, as described in supplemental material?
    \item[] Answer: \answerYes{}  % Replace by \answerYes{}, \answerNo{}, or \answerNA{}.
    \item[] Justification: The code is available at \href{https://github.com/reeive/TPD}{https://github.com/reeive/TPD}.
    \item[] Guidelines:
    \begin{itemize}
        \item The answer \answerNA{} means that paper does not include experiments requiring code.
        \item Please see the NeurIPS code and data submission guidelines (\url{https://neurips.cc/public/guides/CodeSubmissionPolicy}) for more details.
        \item While we encourage the release of code and data, we understand that this might not be possible, so \answerNo{} is an acceptable answer. Papers cannot be rejected simply for not including code, unless this is central to the contribution (e.g., for a new open-source benchmark).
        \item The instructions should contain the exact command and environment needed to run to reproduce the results. See the NeurIPS code and data submission guidelines (\url{https://neurips.cc/public/guides/CodeSubmissionPolicy}) for more details.
        \item The authors should provide instructions on data access and preparation, including how to access the raw data, preprocessed data, intermediate data, and generated data, etc.
        \item The authors should provide scripts to reproduce all experimental results for the new proposed method and baselines. If only a subset of experiments are reproducible, they should state which ones are omitted from the script and why.
        \item At submission time, to preserve anonymity, the authors should release anonymized versions (if applicable).
        \item Providing as much information as possible in supplemental material (appended to the paper) is recommended, but including URLs to data and code is permitted.
    \end{itemize}

\item {\bf Experimental setting/details}
    \item[] Question: Does the paper specify all the training and test details (e.g., data splits, hyperparameters, how they were chosen, type of optimizer) necessary to understand the results?
    \item[] Answer: \answerYes{} % Replace by \answerYes{}, \answerNo{}, or \answerNA{}.
    \item[] Justification: The experimental setup details tasks, datasets, settings, baselines, and hyperparameters.
    \item[] Guidelines:
    \begin{itemize}
        \item The answer \answerNA{} means that the paper does not include experiments.
        \item The experimental setting should be presented in the core of the paper to a level of detail that is necessary to appreciate the results and make sense of them.
        \item The full details can be provided either with the code, in appendix, or as supplemental material.
    \end{itemize}

\item {\bf Experiment statistical significance}
    \item[] Question: Does the paper report error bars suitably and correctly defined or other appropriate information about the statistical significance of the experiments?
    \item[] Answer: \answerYes{} % Replace by \answerYes{}, \answerNo{}, or \answerNA{}.
    \item[] Justification: The appendix reports extensive per-dataset, sensitivity, and ablation analyses supporting robustness across settings.
    \item[] Guidelines:
    \begin{itemize}
        \item The answer \answerNA{} means that the paper does not include experiments.
        \item The authors should answer \answerYes{} if the results are accompanied by error bars, confidence intervals, or statistical significance tests, at least for the experiments that support the main claims of the paper.
        \item The factors of variability that the error bars are capturing should be clearly stated (for example, train/test split, initialization, random drawing of some parameter, or overall run with given experimental conditions).
        \item The method for calculating the error bars should be explained (closed form formula, call to a library function, bootstrap, etc.)
        \item The assumptions made should be given (e.g., Normally distributed errors).
        \item It should be clear whether the error bar is the standard deviation or the standard error of the mean.
        \item It is OK to report 1-sigma error bars, but one should state it. The authors should preferably report a 2-sigma error bar than state that they have a 96\% CI, if the hypothesis of Normality of errors is not verified.
        \item For asymmetric distributions, the authors should be careful not to show in tables or figures symmetric error bars that would yield results that are out of range (e.g., negative error rates).
        \item If error bars are reported in tables or plots, the authors should explain in the text how they were calculated and reference the corresponding figures or tables in the text.
    \end{itemize}

\item {\bf Experiments compute resources}
    \item[] Question: For each experiment, does the paper provide sufficient information on the computer resources (type of compute workers, memory, time of execution) needed to reproduce the experiments?
    \item[] Answer: \answerYes{} % Replace by \answerYes{}, \answerNo{}, or \answerNA{}.
    \item[] Justification: The paper reports H100 GPU use and wall-clock overhead.
    \item[] Guidelines:
    \begin{itemize}
        \item The answer \answerNA{} means that the paper does not include experiments.
        \item The paper should indicate the type of compute workers CPU or GPU, internal cluster, or cloud provider, including relevant memory and storage.
        \item The paper should provide the amount of compute required for each of the individual experimental runs as well as estimate the total compute. 
        \item The paper should disclose whether the full research project required more compute than the experiments reported in the paper (e.g., preliminary or failed experiments that didn't make it into the paper). 
    \end{itemize}
    
\item {\bf Code of ethics}
    \item[] Question: Does the research conducted in the paper conform, in every respect, with the NeurIPS Code of Ethics \url{https://neurips.cc/public/EthicsGuidelines}?
    \item[] Answer: \answerYes{} % Replace by \answerYes{}, \answerNo{}, or \answerNA{}.
    \item[] Justification: The experiments use public benchmarks and comply with NeurIPS ethics.
    \item[] Guidelines:
    \begin{itemize}
        \item The answer \answerNA{} means that the authors have not reviewed the NeurIPS Code of Ethics.
        \item If the authors answer \answerNo, they should explain the special circumstances that require a deviation from the Code of Ethics.
        \item The authors should make sure to preserve anonymity (e.g., if there is a special consideration due to laws or regulations in their jurisdiction).
    \end{itemize}

\item {\bf Broader impacts}
    \item[] Question: Does the paper discuss both potential positive societal impacts and negative societal impacts of the work performed?
    \item[] Answer: \answerNA{} % Replace by \answerYes{}, \answerNo{}, or \answerNA{}.
    \item[] Justification: The paper focuses on methodology and does not separately discuss broader impacts.
    \item[] Guidelines:
    \begin{itemize}
        \item The answer \answerNA{} means that there is no societal impact of the work performed.
        \item If the authors answer \answerNA{} or \answerNo, they should explain why their work has no societal impact or why the paper does not address societal impact.
        \item Examples of negative societal impacts include potential malicious or unintended uses (e.g., disinformation, generating fake profiles, surveillance), fairness considerations (e.g., deployment of technologies that could make decisions that unfairly impact specific groups), privacy considerations, and security considerations.
        \item The conference expects that many papers will be foundational research and not tied to particular applications, let alone deployments. However, if there is a direct path to any negative applications, the authors should point it out. For example, it is legitimate to point out that an improvement in the quality of generative models could be used to generate Deepfakes for disinformation. On the other hand, it is not needed to point out that a generic algorithm for optimizing neural networks could enable people to train models that generate Deepfakes faster.
        \item The authors should consider possible harms that could arise when the technology is being used as intended and functioning correctly, harms that could arise when the technology is being used as intended but gives incorrect results, and harms following from (intentional or unintentional) misuse of the technology.
        \item If there are negative societal impacts, the authors could also discuss possible mitigation strategies (e.g., gated release of models, providing defenses in addition to attacks, mechanisms for monitoring misuse, mechanisms to monitor how a system learns from feedback over time, improving the efficiency and accessibility of ML).
    \end{itemize}
    
\item {\bf Safeguards}
    \item[] Question: Does the paper describe safeguards that have been put in place for responsible release of data or models that have a high risk for misuse (e.g., pre-trained language models, image generators, or scraped datasets)?
    \item[] Answer: \answerNA{} % Replace by \answerYes{}, \answerNo{}, or \answerNA{}.
    \item[] Justification: The paper poses no such risks.
    \item[] Guidelines:
    \begin{itemize}
        \item The answer \answerNA{} means that the paper poses no such risks.
        \item Released models that have a high risk for misuse or dual-use should be released with necessary safeguards to allow for controlled use of the model, for example by requiring that users adhere to usage guidelines or restrictions to access the model or implementing safety filters. 
        \item Datasets that have been scraped from the Internet could pose safety risks. The authors should describe how they avoided releasing unsafe images.
        \item We recognize that providing effective safeguards is challenging, and many papers do not require this, but we encourage authors to take this into account and make a best faith effort.
    \end{itemize}

\item {\bf Licenses for existing assets}
    \item[] Question: Are the creators or original owners of assets (e.g., code, data, models), used in the paper, properly credited and are the license and terms of use explicitly mentioned and properly respected?
    \item[] Answer: \answerYes{} % Replace by \answerYes{}, \answerNo{}, or \answerNA{}.
    \item[] Justification: All used assets (public datasets, algorithm implementations) are cited.
    \item[] Guidelines:
    \begin{itemize}
        \item The answer \answerNA{} means that the paper does not use existing assets.
        \item The authors should cite the original paper that produced the code package or dataset.
        \item The authors should state which version of the asset is used and, if possible, include a URL.
        \item The name of the license (e.g., CC-BY 4.0) should be included for each asset.
        \item For scraped data from a particular source (e.g., website), the copyright and terms of service of that source should be provided.
        \item If assets are released, the license, copyright information, and terms of use in the package should be provided. For popular datasets, \url{paperswithcode.com/datasets} has curated licenses for some datasets. Their licensing guide can help determine the license of a dataset.
        \item For existing datasets that are re-packaged, both the original license and the license of the derived asset (if it has changed) should be provided.
        \item If this information is not available online, the authors are encouraged to reach out to the asset's creators.
    \end{itemize}

\item {\bf New assets}
    \item[] Question: Are new assets introduced in the paper well documented and is the documentation provided alongside the assets?
    \item[] Answer: \answerYes{} % Replace by \answerYes{}, \answerNo{}, or \answerNA{}.
    \item[] Justification: The code is available at \href{https://github.com/reeive/TPD}{https://github.com/reeive/TPD}.
    \item[] Guidelines:
    \begin{itemize}
        \item The answer \answerNA{} means that the paper does not release new assets.
        \item Researchers should communicate the details of the dataset\slash code\slash model as part of their submissions via structured templates. This includes details about training, license, limitations, etc. 
        \item The paper should discuss whether and how consent was obtained from people whose asset is used.
        \item At submission time, remember to anonymize your assets (if applicable). You can either create an anonymized URL or include an anonymized zip file.
    \end{itemize}

\item {\bf Crowdsourcing and research with human subjects}
    \item[] Question: For crowdsourcing experiments and research with human subjects, does the paper include the full text of instructions given to participants and screenshots, if applicable, as well as details about compensation (if any)? 
    \item[] Answer: \answerNA{} % Replace by \answerYes{}, \answerNo{}, or \answerNA{}.
    \item[] Justification: The paper does not involve crowdsourcing nor research with human subjects.
    \item[] Guidelines:
    \begin{itemize}
        \item The answer \answerNA{} means that the paper does not involve crowdsourcing nor research with human subjects.
        \item Including this information in the supplemental material is fine, but if the main contribution of the paper involves human subjects, then as much detail as possible should be included in the main paper. 
        \item According to the NeurIPS Code of Ethics, workers involved in data collection, curation, or other labor should be paid at least the minimum wage in the country of the data collector. 
    \end{itemize}

\item {\bf Institutional review board (IRB) approvals or equivalent for research with human subjects}
    \item[] Question: Does the paper describe potential risks incurred by study participants, whether such risks were disclosed to the subjects, and whether Institutional Review Board (IRB) approvals (or an equivalent approval/review based on the requirements of your country or institution) were obtained?
    \item[] Answer: \answerNA{} % Replace by \answerYes{}, \answerNo{}, or \answerNA{}.
    \item[] Justification: The paper does not involve crowdsourcing nor research with human subjects.
    \item[] Guidelines:
    \begin{itemize}
        \item The answer \answerNA{} means that the paper does not involve crowdsourcing nor research with human subjects.
        \item Depending on the country in which research is conducted, IRB approval (or equivalent) may be required for any human subjects research. If you obtained IRB approval, you should clearly state this in the paper. 
        \item We recognize that the procedures for this may vary significantly between institutions and locations, and we expect authors to adhere to the NeurIPS Code of Ethics and the guidelines for their institution. 
        \item For initial submissions, do not include any information that would break anonymity (if applicable), such as the institution conducting the review.
    \end{itemize}

\item {\bf Declaration of LLM usage}
    \item[] Question: Does the paper describe the usage of LLMs if it is an important, original, or non-standard component of the core methods in this research? Note that if the LLM is used only for writing, editing, or formatting purposes and does \emph{not} impact the core methodology, scientific rigor, or originality of the research, declaration is not required.
    %this research? 
    \item[] Answer: \answerNA{} % Replace by \answerYes{}, \answerNo{}, or \answerNA{}.
    \item[] Justification: The core method and experiments do not use LLMs.
    \item[] Guidelines:
    \begin{itemize}
        \item The answer \answerNA{} means that the core method development in this research does not involve LLMs as any important, original, or non-standard components.
        \item Please refer to our LLM policy in the NeurIPS handbook for what should or should not be described.
    \end{itemize}

\end{enumerate}

\end{document}